\newcommand\nonumfootnote[1]{%
\begingroup%
    \renewcommand\thefootnote{}\footnote{\hspace{-4pt}#1}%
    \addtocounter{footnote}{-1}%
\endgroup%
}
\definecolor{mydarkblue}{rgb}{0,0.08,0.45}
\newcommand{\tableofappendix}{\renewcommand{\contentsname}{Contents of Appendix}\newpage\tableofcontents\addtocontents{toc}{\protect\setcounter{tocdepth}{3}}\clearpage}
\theoremstyle{plain}
\newtheorem{theorem}{Theorem}[section]
\newtheorem{lemma}[theorem]{Lemma}
\theoremstyle{definition}
\newtheorem{definition}{Definition}[section]
\newtheorem{assumption}{Assumption}[section]
\crefname{assumption}{Assumption}{Assumptions}
\theoremstyle{remark}
\def\phenomenon{{instability\xspace}}
\def\Phenomenon{{Instability\xspace}}
\def\PHenomenon{{Instability\xspace}}
\def\intrinsiceffect{{intrinsic instability\xspace}}
\def\Intrinsiceffect{{Intrinsic instability\xspace}}
\def\intrinsiceffectmetric{{intrinsic instability coefficient\xspace}}
\def\Intrinsiceffectmetric{{Intrinsic instability coefficient\xspace}}
\def\effect{{instability\xspace}}
\def\Effect{{Instability\xspace}}
\def\effectmetric{{instability coefficient\xspace}}
\def\effectmetrics{{instability coefficients\xspace}}
\def\Effectmetric{{Instability coefficient\xspace}}
\def\Effectmetric{{Instability coefficients\xspace}}
\newcommand{\myparagraph}[1]{{\noindent\textbf{#1}\xspace}}
\definecolor{figblue}{HTML}{1F77B4}
\definecolor{figgreen}{HTML}{2CA02C}
\definecolor{figred}{HTML}{D62728}
\newcommand{\todo}[1]{{\color{red}#1}}
\newcommand{\TODO}[1]{\textbf{\color{red}[TODO: #1]}}
\newcommand{\TOCITE}[1]{\textbf{\color{blue}[TOCITE: #1]}}
\newcommand{\LATER}[1]{\textbf{\color{green}[LATER: #1]}}
\renewcommand{\TODO}[1]{}
\renewcommand{\todo}[1]{}
\renewcommand{\TOCITE}[1]{}
\renewcommand{\LATER}[1]{}
\def\dd{{\rm d}}
\def\supp{{\rm supp}}
\def\defeq{{:=}}
\def\min{{\mathrm{min}}}
\def\max{{\mathrm{max}}}
\def\datadist{{\pi_{\rm data}}}
\def\gendist{{\pi_{\rm gen}}}
\def\priordist{{\pi_{\rm prior}}}
\def\realdist{{\pi_{\rm real}}}
\def\gaussiandist{{\mathcal{N}(\mathbf{0}, \mathbf{I})}}
\def\pgen{{p_{\rm gen}}}
\def\preal{{p_{\rm real}}}
\def\instableprob{{\mathcal{P}_M}}
\def\eqref#1{equation~\ref{#1}}
\def\1{\bm{1}}
\def\vh{{\bm{h}}}
\def\vn{{\bm{n}}}
\def\vu{{\bm{u}}}
\def\vv{{\bm{v}}}
\def\vx{{\bm{x}}}
\def\vy{{\bm{y}}}
\def\vz{{\bm{z}}}
\DeclareMathAlphabet{\mathsfit}{\encodingdefault}{\sfdefault}{m}{sl}
\SetMathAlphabet{\mathsfit}{bold}{\encodingdefault}{\sfdefault}{bx}{n}
\def\gA{{\mathcal{A}}}
\def\gE{{\mathcal{E}}}
\def\gN{{\mathcal{N}}}
\def\gR{{\mathcal{R}}}
\def\gU{{\mathcal{U}}}
\def\sR{{\mathbb{R}}}
\def\sZ{{\mathbb{Z}}}
\newcommand{\E}{\mathbb{E}}
\title{\PHenomenon~in Diffusion ODEs: \\An Explanation for Inaccurate Image Reconstruction}
\author{%
    {\bf Han Zhang$^{1*\ddag}$ \quad
    Jinghong Mao$^{*}$ \quad
    Shangwen Zhu$^{1}$ \quad
    Zhantao Yang$^{1\ddag}$}
    \\[1pt]
    {\bf Lianghua Huang$^{2}$ \quad
    Yu Liu$^{2}$ \quad
    Deli Zhao$^{3}$ \quad
    Ruili Feng$^{2}$ \quad
    Fan Cheng$^{1\dagger}$}
    \\[5pt]
    $^1$Shanghai Jiao Tong University \quad
    $^2$Tongyi Lab \quad
    $^3$Alibaba Group
}
\begin{document}

\maketitle

\nonumfootnote{
$^*$Equal contribution.
$^\ddag$Work performed during internship at Tongyi Lab.
$^\dagger$Corresponding author.
}
\nonumfootnote{
\ Emails:
Han Zhang $<$hzhang9617@gmail.com$>$,
Jinghong Mao $<$maojh1009@gmail.com$>$,
Fan Cheng $<$chengfan@sjtu.edu.cn$>$.
}

\vspace{-30pt}
\begin{abstract}

Diffusion reconstruction plays a critical role in various applications such as image editing, restoration, and style transfer.
In theory, the reconstruction should be simple -- it just inverts and regenerates images by numerically solving the Probability-Flow Ordinary Differential Equation (PF-ODE).
Yet in practice, noticeable reconstruction errors have been observed, which cannot be well explained by numerical errors.
In this work, we identify a deeper intrinsic property in the PF-ODE generation process, the \textit{\phenomenon}, that can further amplify the reconstruction errors.
The root of this \phenomenon~lies in \textit{the sparsity inherent in the generation distribution, which means that the probability is concentrated on scattered and small regions while the vast majority remains almost empty.}
To demonstrate the existence of \phenomenon~and its amplification on reconstruction error, we conduct experiments on both toy numerical examples and popular open-sourced diffusion models.
Furthermore, based on the characteristics of image data, we theoretically prove that the \phenomenon's~probability converges to one as the data dimensionality increases.
Our findings highlight the inherent challenges in diffusion-based reconstruction and can offer insights for future improvements.

\end{abstract}
\section{Introduction}
\label{sec:intro}

Diffusion models have rapidly emerged as a pivotal class of generative models, demonstrating superior performance, particularly in image generation~\citep{ho2020denoising,rombach2022high,saharia2022photorealistic,ramesh2022hierarchical,balaji2023ediffi,pernias2023wuerstchen,peebles2023scalable,podell2023sdxl,kawar2023imagic,li2023snapfusion,esser2024scaling,liu2024instaflow}. A fundamental technique within diffusion models is diffusion reconstruction, which comprises diffusion inversion~\citep{song2020denoising,ramesh2022hierarchical,chung2022come} and diffusion generation--both executed through numerical solving the Probability Flow Ordinary Differential Equations (PF-ODEs)~\citep{song2020score,lu2022dpm}.
Diffusion inversion\footnote{We follow the convention that uses \textit{inversion} indicating solving PF-ODE along the forward time~\citep{mokady2023null,wallace2023edict,zhang2024exact}.} first converts an image into an inverted noise, which is then used by diffusion generation process to reconstruct the original image. Diffusion reconstruction is crucial due to its extensive applications in downstream tasks, including image editing~\citep{gal2022image,hertz2022prompt}, restoration~\citep{xiao2024dreamclean}, and style transfer~\citep{su2022dual}.

\begin{figure}[t]
    \centering
    \includegraphics[width=\linewidth]{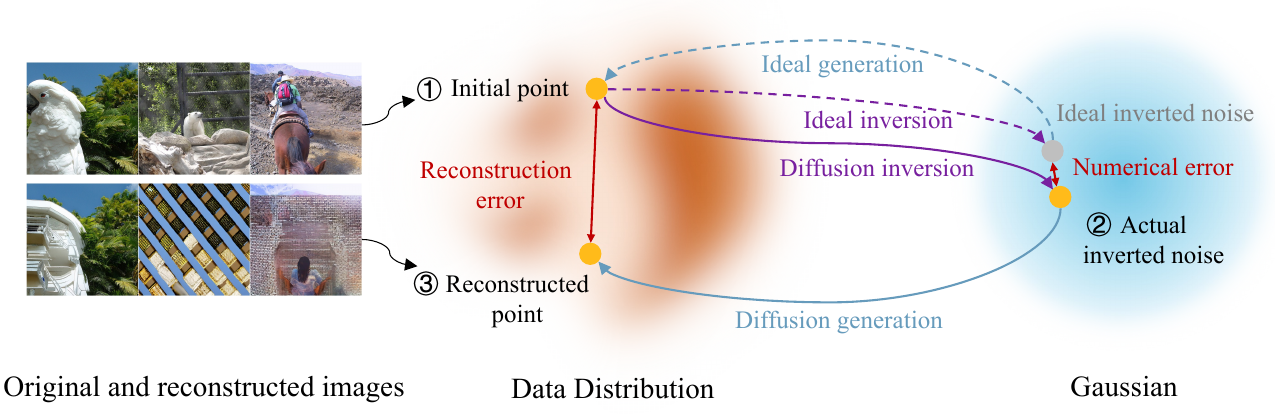}
    \vspace{-20pt}
    \caption{
    \textbf{\Phenomenon~amplifies diffusion reconstruction errors}.
    Given an \textit{initial point} in the \textit{data distribution}, the reconstruction process first undergoes \textit{diffusion inversion} to produce the \textit{actual inverted noise}.
    \textit{Numerical errors} introduced by the inversion will cause the actual inverted noise to deviate from the theoretically ideal inverted noise.
    When the generation process exhibits \phenomenon, these numerical errors are amplified, resulting in significantly larger \textit{reconstruction errors}.
    This amplification makes accurate reconstruction challenging.
    }
    \vspace{-15pt}
    \label{fig:error_amplification_concept}
\end{figure}

From the theoretical perspective, the reconstruction should be simple, which only needs to numerically solve the PF-ODE in two opposite directions.
Yet in practice, significant reconstruction errors can occasionally happen, as shown in \cref{fig:error_amplification_concept}.
Although previous works attribute the reconstruction inaccuracies to the inherent numerical discretization errors in solving PF-ODEs~\citep{wallace2023edict,wang2024belm,lin2024schedule,zhang2024exact},
it does not well explain such significant discrepancy.
The fundamental causes underlying these pronounced reconstruction discrepancies remain systematically unexplored.

In this work, we point out the existence of \textit{\phenomenon}~in the diffusion generation process, and demonstrate its amplification effect on the reconstruction errors.
Specifically, \phenomenon~characterizes scenarios where the diffusion generation process is sensitive to the initial noise.
Thus, any numerical error in the inverted noise will be amplified during the regeneration phase.
\cref{fig:error_amplification_concept} illustrates the process by which reconstruction errors arise.

To further demonstrate the presence of \phenomenon~from a theoretical perspective, we establish a lower bound on the probability of \phenomenon.
Note that here we choose to analyze the ideal diffusion reconstruction process, which is free from the influence of any numerical error and focused on the inherent property of PF-ODE. 
Based on reasonable assumptions about the real image distribution and the generation distribution of diffusion models, we demonstrate that when the data dimensionality increases as infinity, the probability of \phenomenon~in reconstruction tends to one!
Considering the high dimensionality of image data, such surprising asymptotic result provides theoretical justification for the \phenomenon~observed in image reconstruction within diffusion models.

\myparagraph{Mechanisms of \phenomenon.} For better understanding on the mechanism behind the \phenomenon~in diffusion reconstruction, we provide an intuitive illustration in \cref{fig:phenomenon-occurrence-concept}.
Our analysis reveals that \textit{the sparsity of the generation distribution plays a pivotal role in the emergence of \phenomenon}.
Here, the sparsity means that the generation distribution is concentrated in scattered, small regions, while the majority of regions possess low probability density.
Recall that the PF-ODE-based generation process actually builds a push-forward mapping from the Gaussian distribution to generation distribution.
According to the push-forward formula, the generation mapping must preserve probability.
Thus, \textit{suppose a region $A$ in the Gaussian distribution is mapped to a significantly lower density region $B$ in the generation distribution, the area of region $B$ will be extended to maintain the probability}. 
This expansion necessitates large gradients in the generation process, indicating the emergence of \phenomenon.
Meanwhile, during image reconstruction, the initial image is sampled from some underlying real distribution.
This real distribution is generally different from the generation distribution.
Such distribution discrepancy can lead to a non-negligible probability that the initial image resides in the low density region of the generation distribution, and thus make the \phenomenon~high probable.

\myparagraph{Main organization.}
In \cref{sec:preliminaries}, we will introduce preliminaries about PF-ODE and the definition of \phenomenon.
Subsequently in \cref{sec:phenomenon}, we will provide experimental evidence that \phenomenon~actually exists in diffusion generation, and then demonstrate its amplification on reconstruction error.
Finally in \cref{sec:analysis}, we further provide theoretical evidence on \phenomenon. Based on the characteristics of image distributions, we reveal that the sparsity of generation distribution can induce the \phenomenon, and theoretically prove that the \phenomenon~will almost surely occur during reconstruction for infinite dimensional images.
\section{Preliminaries} \label{sec:preliminaries}

\subsection{Diffusion models and probability flow ODE}

Diffusion models can generate samples under desired distribution $\gendist$ from noise under standard Gaussian distribution $\gaussiandist$~\citep{ho2020denoising,song2020score}.
The generation process can be achieved by numerically solving the PF-ODE~\citep{ANDERSON1982313,song2020score,liu2022flow}:
\begin{equation}
    \label{eq:pf-ode}
    \dd \vx_t / \dd t = \vv(\vx_t, t),\,\text{where }\,\vv(\vx, t) = \E[Z  - X | X_t = \vx],
\end{equation}
$t \in [0, 1]$, $\vv: \sR^n \times [0, 1] \to \sR^n$ is a time-dependent vector field defined by the conditional expectation in \cref{eq:pf-ode}.
In the expectation, the noise $Z \sim \gaussiandist$ and the data $X \sim \gendist$ are independently sampled, $X_t = (1 - t) X + t Z$ is a linear interpolation between them. 

Note that here we adopt the flow matching formulation to represent the PF-ODE~\citep{liu2022flow,lipman2022flow}, and the subsequent sections will consistently utilize this formulation.
Here, we adopt the flow matching formulation for two reasons: 1) the flow matching formulation is actually equivalent to conventional PF-ODEs when one distribution is set to Gaussian~\citep{song2020score,liu2022flow,albergo2023stochastic}, and 2) this formulation is widely adopted in recent top-tier text-to-image models~\citep{esser2024scaling,flux2024}.

\begin{figure}[t]
    \centering
    \includegraphics[width=0.7\linewidth]{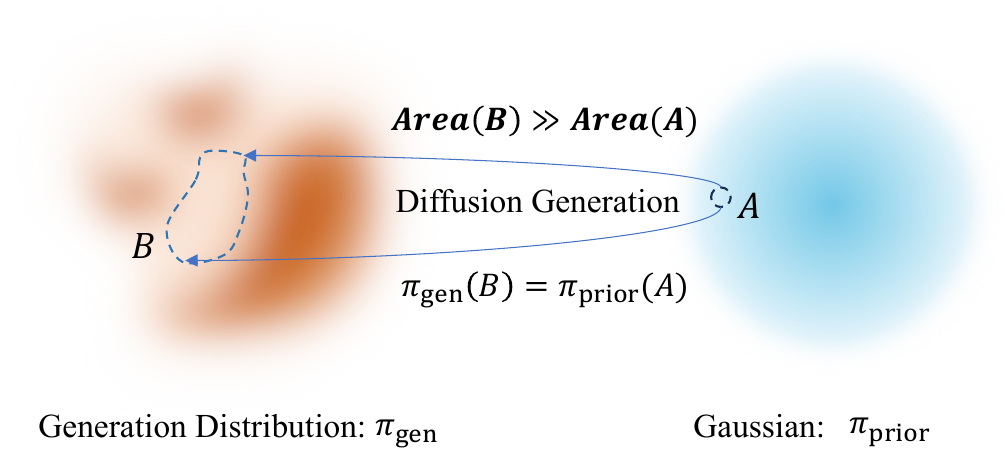}
    \vspace{-13pt}
    \caption{
    \textbf{Intuitive illustration of instability occurrence during the generation process.} For typical image data, \textit{the generation distribution is inherently sparse, meaning that most of the probability mass is concentrated in scattered, small regions, while the majority of the space has low probability.} In contrast, the prior distribution for the generation process, \textit{i.e.}, the Gaussian, concentrates its probability within a bounded region and is relatively uniformly distributed.
    According to the push-forward formula, \textit{the generation mapping must preserve probability measures, ensuring that the image of a probability region $A$ under the generation mapping $B$ satisfies $\gendist(B) = \priordist(A)$.} When $B$ falls into a low-density region of the generation distribution, significantly lower than the density at $A$, maintaining probability equality requires that the area (or more rigorously, the Lebesgue measure) of $B$ be much larger than that of $A$. This amplification effect results in large gradients during the generation process, indicating the emergence of \phenomenon. The probability of the \phenomenon~is analyzed in \cref{sec:analysis} in detail.
    }
    \vspace{-15pt}
    \label{fig:phenomenon-occurrence-concept}
\end{figure}

\myparagraph{Diffusion generation and inversion.}
Both diffusion generation and inversion solve the PF-ODE in \cref{eq:pf-ode}, but in opposite directions.
Diffusion generation mapping can produce data samples from initial noise, while diffusion inversion mapping yields an inverted noise from an initial data.
We formally define these two mappings as follows:
\begin{definition}[Diffusion generation and inversion mappings]
\label{def:diff-map}
The diffusion generation mapping $G: \sR^n \to \supp (\gendist)$ is defined as $G(\vz) = \vx_0$, where $\vx_0$ is the solution at $t=0$ of the ODE \cref{eq:pf-ode} with the initial value $\vx_1 = \vz$ at $t=1$.
Meanwhile, the diffusion inversion mapping can be denoted by $G^{-1}$, the inverse function of the generation mapping.
\end{definition}

\subsection{Definition of \phenomenon} \label{subsec:phenomenon-def}
For later analysis, we first provide the definition of \Intrinsiceffect~for general mapping $F$.
\begin{definition}[\Intrinsiceffect] \label{def:phenomenon}
Suppose $F: \sR^n \to \sR^n $ is a continuously differentiable mapping. For an input vector $\vx \in \sR^n$, if there exists another non-zero vector $\vu \in \sR^n$ such that
\begin{equation}
    \label{eq:intrinsic-effect-coef}
    \gE_F(\vx, \vu) \defeq \Vert J_F(\vx) \vu \Vert / \Vert \vu \Vert > 1,
\end{equation}
where $J_F(\vx) \in \sR^{n \times n}$ denotes the Jacobian matrix of $F$ evaluated at $\vx$, then we say that $F$ exhibits the \textit{\intrinsiceffect} at $\vx$ in the direction $\vu$. The scalar $\gE_F(\vx, \vu)$ is referred to as the \textit{\intrinsiceffectmetric} of $F$ at $\vx$ along the direction $\vu$.
\end{definition}

When the \intrinsiceffectmetric~exceeding one, it indeed indicates that the mapping $F$ amplifies any \textit{infinitesimal perturbation} along the specific direction $\vu$.
The magnitude of $\gE_F(\vx, \vu)$ quantitatively measures the sensitivity of $F$ to changes in the input direction $\vu$.
The following proposition will better illustrate this error amplification phenomenon for \textit{non-negligible perturbations}, which we refer to as \textit{\effect~effect}, when \intrinsiceffect~occurs:

\begin{restatable}[\Effect~effect]{proposition}{PROPeffect}\label{prop:effect}
Suppose $F: \sR^n \to \sR^n$ is a continuously differentiable mapping. For any $\vx,\vu \in \sR^n$, and any $\Delta>0$, there exists a non-zero perturbation $\vn \in \sR^n$ such that
\begin{equation} \label{eq:effect-coef}
    \gA_F(\vx, \vn) \defeq \frac{\Vert F(\vx + \vn) - F(\vx)\Vert}{\Vert \vn \Vert} \geq \frac{\mathcal E_F(\vx,\vu)}{1+\Delta},
\end{equation}
where $\gA_F(\vx, \vn)$ is referred to as the \textit{\effectmetric}. Furthermore, if $\mathcal E_F(\vx,\vu) > 1$, we have a $\vn$ that satisfies $\gA_F(\vx, \vn) > 1$, then we say that $F$ exhibits the \effect~at $\vx$ under perturbation $\vn$.
\end{restatable}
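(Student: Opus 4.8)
The plan is to exploit the definition of the Jacobian $J_F(\vx)$ as the limit of difference quotients along the direction $\vu$. By definition of differentiability, for any $\vx$ and any direction $\vu$ we have
\begin{equation*}
    F(\vx + t\vu) - F(\vx) = t\, J_F(\vx)\vu + o(t) \quad \text{as } t \to 0^+.
\end{equation*}
Dividing through by $\Vert t\vu\Vert = t\Vert \vu\Vert$ (taking $t > 0$) gives
\begin{equation*}
    \frac{\Vert F(\vx + t\vu) - F(\vx)\Vert}{\Vert t\vu\Vert} \;\longrightarrow\; \frac{\Vert J_F(\vx)\vu\Vert}{\Vert\vu\Vert} = \gE_F(\vx,\vu)
\end{equation*}
as $t \to 0^+$. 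So for any $\Delta > 0$, there exists $t_0 > 0$ small enough that the left-hand side is at least $\gE_F(\vx,\vu)/(1+\Delta)$; setting $\vn = t_0 \vu$ (which is nonzero provided $\vu \ne \vzero$) yields a perturbation with $\gA_F(\vx,\vn) \ge \gE_F(\vx,\vu)/(1+\Delta)$, which is exactly \cref{eq:effect-coef}.

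There is one edge case to dispatch: the statement quantifies over \emph{all} $\vx, \vu \in \sR^n$, including $\vu = \vzero$ (and also $\vx$ such that $\gE_F(\vx,\vu) = 0$). When $\vu = \vzero$ the quantity $\gE_F(\vx,\vu)$ is not well-defined by the displayed formula, but in that degenerate case the claimed bound is vacuous or trivially satisfiable, so I would either implicitly restrict to $\vu \ne \vzero$ (matching the nonzero hypothesis in \cref{def:phenomenon}) or note that any nonzero $\vn$ works when the right-hand side is $0$. The rigorous content is entirely in the limit argument above.

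For the second assertion, suppose $\gE_F(\vx,\vu) > 1$. Then choose $\Delta > 0$ small enough that $\gE_F(\vx,\vu)/(1+\Delta) > 1$ as well — concretely, any $\Delta < \gE_F(\vx,\vu) - 1$ works. Applying the first part with this $\Delta$ produces a nonzero $\vn$ with $\gA_F(\vx,\vn) \ge \gE_F(\vx,\vu)/(1+\Delta) > 1$, establishing that $F$ exhibits the \effect~at $\vx$ under $\vn$.

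This proof is essentially a one-line consequence of Fréchet differentiability, so there is no genuine obstacle; the only thing requiring a modicum of care is making the $o(t)$ control uniform enough to extract the explicit $\Delta$-dependent threshold, which is immediate from the definition of the little-$o$ and does not need any compactness or continuity of $J_F$ beyond differentiability at the single point $\vx$. I would keep the write-up to the displayed limit computation plus the two-line selection of $\Delta$ for the second claim.
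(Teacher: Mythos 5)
Your proof is correct and follows essentially the same route as the paper's: a first-order Taylor expansion of $F$ at $\vx$ along a fixed direction, with the $o(t)$ remainder absorbed by shrinking the step size to achieve the $\Delta$-dependent threshold, followed by the same choice $\Delta < \mathcal{E}_F(\vx,\vu)-1$ for the second claim. The only cosmetic difference is that the paper first passes through the operator norm to pick a unit vector $\vh$ with $\Vert J_F(\vx)\vh\Vert \geq \mathcal{E}_F(\vx,\vu)$, whereas you perturb directly along $\vu$ itself, which is equally valid and slightly more direct.
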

\section{\Phenomenon~and its amplification on reconstruction error} \label{sec:phenomenon}

\begin{figure*}[t]
    \centering
    \begin{overpic}[width=\linewidth]{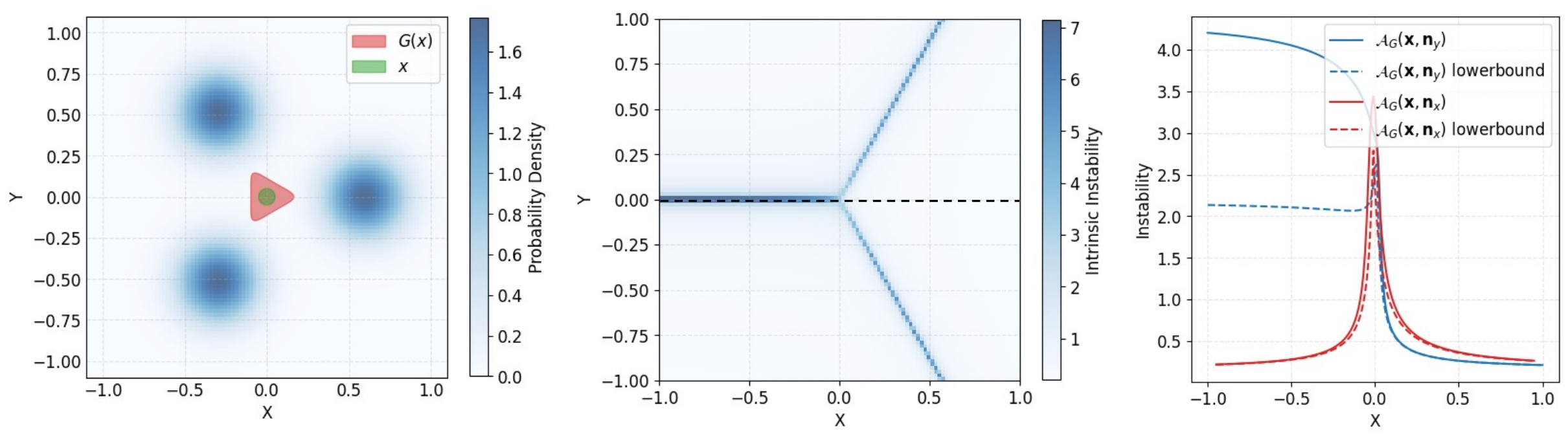}
\put(3,-2){(a) Generation distribution}
\put(38,-2){(b) \parbox[t]{5cm}{\Intrinsiceffectmetric\\ along $y$-axis}}
\put(75,-2){(c) \parbox[t]{5cm}{\Effectmetric\\ along $\vn_x$ and $\vn_y$}}
\end{overpic}
\vspace{8pt}
    \caption{\textbf{Experimental evidence on the existence of \phenomenon~for a two-dimensional generation distribution.} (a) visualizes the density function of the generation distribution, which is a mixture of Gaussians generated from a standard gaussian. The green region, after applying generation mapping, becomes a significantly larger red region. (b) visualizes the \intrinsiceffectmetric~along the $y$-axis. It is observed that the \intrinsiceffectmetric~along the $y$-axis in the central region significantly exceeds one, consistent with the position of the green region in (a). For each point on the dashed line in (b), we compute the \effectmetric~$\gA_G(\vx, \vn)$ of diffusion generation mapping $G$ under perturbation $\vn_x$ along $x$-axis and perturbation $\vn_y$ along $y$-axis, respectively. The results are shown in (c). The dashed lines show the lower bounds of $\gA_G(\vx, \vn)$ given in \cref{prop:effect}.}
    \label{fig:phenomenon-example}
    \vspace{-15pt}
\end{figure*}

In this section, we first empirically demonstrate the existence of \phenomenon~in the diffusion generation mapping $G$~in \Cref{subsec:phenomenon-by-exp}. Subsequently, we present experimental evidence corroborating the positive correlation between \phenomenon~and the reconstruction error in \Cref{subsec:recon-error}.

\subsection{Empirical evidence on the existence of \phenomenon} \label{subsec:phenomenon-by-exp}

In \cref{subsec:phenomenon-def}, we have defined the \intrinsiceffect~and introduced the \effect~effect. Here, we will demonstrate that the \effect~can exist in the generation process of diffusion models.

\myparagraph{Demonstration by a numerical case.} We present a numerical example illustrated in \cref{fig:phenomenon-example}. The density function of the diffusion generation distribution is depicted in \cref{fig:phenomenon-example}(a). The detailed settings are provided in the supplementary material.

In \cref{fig:phenomenon-example}(a), the red region is obtained by applying the diffusion generation mapping $G$ to the green region. It is evident that $G$ significantly expands the green region along the $y$-axis, indicating that $G$ is highly sensitive to perturbations along the $y$-axis within the green region. \cref{fig:phenomenon-example}(b) displays the \intrinsiceffectmetric~of $G$ along the $y$-axis. It can be observed that the \intrinsiceffectmetric~along the $y$-axis exceeds one in the central region, indicating that the \effect~effect can appear in this area. Moreover, this region coincides with the green region in \cref{fig:phenomenon-example}(a), demonstrating the relationship between the \intrinsiceffect~and the amplification effect.

We further analyze the \effectmetric~$\gA_G(\vx, \vn)$ along the dashed line in \cref{fig:phenomenon-example}(b), as well as the lower bound provided by \cref{prop:effect}. The curves of $\gA_G(\vx, \vn)$ \textit{vs.} $x$ are shown in \cref{fig:phenomenon-example}(c). We observe that in the central region, the \effectmetrics~along $x$ and $y$ axis, \textit{i.e.}, $\gA_G(\vx, \vn_x)$ and $\gA_G(\vx, \vn_y)$, are greater than one, further indicating the presence of \phenomenon.

\LATER{more examples }

\subsection{\Phenomenon~amplifies reconstruction error} \label{subsec:recon-error}

This subsection demonstrates that \phenomenon~in diffusion generation can amplify reconstruction errors.
We begin by formally defining the reconstruction error, and provide a theoretical support on the possibility of large reconstruction error.
Then we demonstrate the amplification effect by correlation analyses using both numerical cases and practical diffusion models.

\myparagraph{Reconstruction error.}
Diffusion reconstruction involves two primary steps for a given data $\vx \in \sR^n$:
\begin{enumerate}[leftmargin=2em,topsep=0ex,itemsep=0ex]
    \setlength{\parskip}{0ex}
    \item The diffusion inversion process defined by \cref{def:diff-map} is numerically solved from the given data $\vx$, which yields an inverted noise $\hat{\vz} = \widehat{G}^{-1}(\vx)$; and then
    \item the diffusion generation process defined by \cref{eq:pf-ode} is numerically solved from $\hat{\vz}$, resulting in a reconstructed data $\hat{\vx} = \widehat{G}(\hat{\vz})$,
\end{enumerate}
where $\widehat{G}$ denotes the approximated diffusion generation mapping by numerically solving PF-ODE in \cref{eq:pf-ode}, and $\widehat{G}^{-1}$ denotes the approximated diffusion inversion mapping, similarly.

The reconstruction error is quantified as the discrepancy between the original data $\vx$ and the reconstructed data $\hat{\vx}$. It can be defined as:
\begin{equation}
\label{eq:recon-error}
    \gR(\vx) = \frac{1}{\sqrt{n}}\Vert \vx - \hat{\vx} \Vert_2 = \frac{1}{\sqrt{n}}\Vert \vx - \widehat{G}(\hat{\vz}) \Vert_2 = \frac{1}{\sqrt{n}}\Vert \vx - \widehat{G}(\widehat{G}^{-1}(\vx)) \Vert_2.
\end{equation}

\begin{figure*}[t]
    \centering
    \begin{overpic}[width=\linewidth]{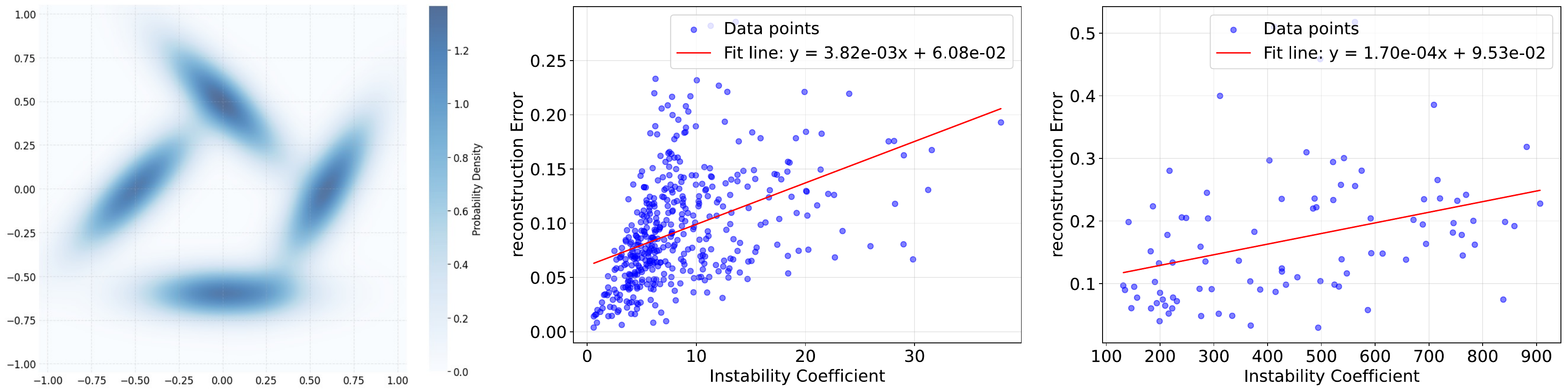}
\put(2,-3){(a) Generation distribution.}
\put(38,-3){\parbox[t]{5cm}{(b) Reconstruction error\\ of the numerical example.}}
\put(73,-3){\parbox[t]{5cm}{(c) Reconstruction error\\ of Stable Diffusion 3.5.}}
\end{overpic}
\vspace{9pt}
    \caption{\textbf{Positive correlation between the reconstruction error and \effectmetric~in a numerical case and Stable Diffusion 3.5~\cite{esser2024scaling}.} In numerical experiments, the diffusion model's generation distribution is a two-dimensional mixture of Gaussians. (a) shows the probability density function. (b) displays the relationship between reconstruction error $\gR(\vx)$ and instability coefficient obtained from reconstruction of different initial values $\vx$. (c) further illustrates the relationship between reconstruction error and instability coefficient in the Stable Diffusion 3.5 model.}
    \label{fig:recon-error}
    \vspace{-10pt}
\end{figure*}

To analyze the causes of reconstruction error, it is essential to recognize two primary factors:
\begin{enumerate}[leftmargin=2em,topsep=0ex,itemsep=0ex]
    \setlength{\parskip}{0ex}
    \item \textit{Discrepancy in inverted noise}: The inverted noise $\hat{\vz} = \widehat{G}^{-1}(\vx)$ obtained during the diffusion inversion process deviates from the ideal noise $\vz \sim \gaussiandist$. This discrepancy arises due to numerical discretization errors inherent in solving the PF-ODE~\cref{eq:pf-ode} and its inverse. As a result, the numerically inverted noise does not perfectly match the ideal one, introducing an initial error into the regeneration pipeline.
    
    \item \textit{Amplification by \phenomenon}: The presence of \phenomenon~in the diffusion generation mapping $\widehat{G}$ can further exacerbate the above discrepancy in the inverted noise, leading to a substantial reconstruction error $\gR(\vx)$.
\end{enumerate}
\vspace{5pt}

The following theorem further demonstrates that both of the aforementioned two factors contribute to the reconstruction error, even if the diffusion generation mapping $G$ is ideal with infinite numerical precision. A detailed proof is provided in the supplementary material.
\begin{restatable}[Risk of large reconstruction error]{theorem}{THMreconerror}\label{thm:recon-error}
    For any data sample $\vx \in \sR^n$, consider the diffusion reconstruction process consisting of a numerically approximated diffusion inversion under Euler method and a precise diffusion generation process $G$.
    Let $\hat{\vz}$ denote the numerically inverted noise, $\vz$ denote the ideal inverted noise, and $\hat{\vx}$ denote the reconstructed data.
    Suppose $\mathcal{E}_G(\vz',\hat{\vz} - \vz) > C$ for some $C > 0$ and all $\vz' \in \{t \vz + (1 - t) \hat{\vz} : t \in [0, 1]\}$. Then, the upper bound $\gU$ of reconstruction error $\mathcal R(\vx)$ satisfies:
    \begin{equation}
        \gU \geq  \underbrace{h {M_2} \frac{(C-1)}{2\log C}}_{\substack{\text{Numerical error} \\ \text{in inverted noise}}} \;\;\cdot \underbrace{C \vphantom{\frac{(C-1)}{2\log C}}}_{\substack{\text{Instability} \\ \text{amplification}}},
    \end{equation}
    where $h$ represents the step size in the numerical solution, and $M_2$ is the estimated upper bound term for the local truncation error of the Euler method.
\end{restatable}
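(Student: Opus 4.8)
The plan is to combine three ingredients: (i) a classical bound on the accumulated global error of Euler's method expressed in terms of the local truncation error, (ii) the definition of the \effectmetric\ and \Cref{prop:effect} to convert the Jacobian-amplification hypothesis $\mathcal E_G(\vz',\hat{\vz}-\vz)>C$ into a genuine amplification of the noise discrepancy $\hat{\vz}-\vz$ under $G$, and (iii) the chain of (in)equalities defining $\mathcal R(\vx)$ in \cref{eq:recon-error}. The key observation is that the reconstruction error factors as ``size of the initial noise discrepancy'' times ``how much $G$ stretches that discrepancy,'' and the theorem's right-hand side is exactly this product, so the proof should mirror that factorization.

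\textbf{Step 1 (Euler error for the inversion).} First I would recall the standard result that when the Euler method is run with step size $h$, the global error after integrating over a unit time interval is bounded, to leading order, by the sum of propagated local truncation errors, giving $\Vert \hat{\vz}-\vz\Vert \le \frac{h M_2}{2}\cdot\frac{e^{L}-1}{L}$ for a Lipschitz/stability constant $L$ of the vector field along the trajectory; here the local truncation error per step is $\le \frac{h^2 M_2}{2}$ with $M_2$ bounding the relevant second-order term. The slightly unusual factor $\frac{C-1}{2\log C}$ in the statement strongly suggests the authors identify the effective stability constant $L$ with $\log C$ (i.e., the instability hypothesis along the segment is used to pin down how errors propagate through the flow as well), so $\frac{e^{L}-1}{L}=\frac{C-1}{\log C}$, and together with the $\tfrac12$ from the local truncation term this yields the claimed ``numerical error in inverted noise'' factor $h M_2\frac{C-1}{2\log C}$. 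I would make this identification explicit and cite the Euler global-error lemma proved in the supplementary material.

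\textbf{Step 2 (Instability amplification of the discrepancy).} Next I would apply the hypothesis $\mathcal E_G(\vz',\hat{\vz}-\vz)>C$ holding uniformly on the segment $\{t\vz+(1-t)\hat{\vz}\}$. By the mean-value/fundamental-theorem-of-calculus argument underlying \Cref{prop:effect} — write $G(\hat{\vz})-G(\vz)=\int_0^1 J_G\big((1-t)\vz+t\hat{\vz}\big)(\hat{\vz}-\vz)\,\dd t$ — the uniform lower bound on the directional Jacobian stretch along the connecting segment forces $\Vert G(\hat{\vz})-G(\vz)\Vert \ge C\Vert \hat{\vz}-\vz\Vert$ (taking $\Delta\to 0$ in \Cref{prop:effect}, or more directly since the integrand never shrinks the perturbation below factor $C$ and keeps a consistent direction up to the integral estimate). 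This supplies the second factor $C$.

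\textbf{Step 3 (Assemble).} Finally, since $G$ is exact, $\hat{\vx}=G(\hat{\vz})$ and the true point is $\vx=G(\vz)$, so $\mathcal R(\vx)=\frac{1}{\sqrt n}\Vert \vx-\hat{\vx}\Vert=\frac{1}{\sqrt n}\Vert G(\vz)-G(\hat{\vz})\Vert \ge \frac{1}{\sqrt n}\,C\,\Vert\hat{\vz}-\vz\Vert$. Substituting the Step 1 bound on $\Vert\hat{\vz}-\vz\Vert$ — interpreting $M_2$ and the factors as already containing the $\frac{1}{\sqrt n}$ normalization, consistent with how $\mathcal R$ is defined — gives $\mathcal U \ge h M_2\frac{C-1}{2\log C}\cdot C$, which is the claim. \textbf{The main obstacle} I anticipate is pinning down precisely where the factor $\frac{C-1}{2\log C}$ comes from: it requires being careful that the same instability constant $C$ controls both the error \emph{accumulation} during Euler integration of the inversion (through $L=\log C$) and the final-map \emph{amplification}, and one must check that the Euler global-error lemma is being invoked in the correct time direction with a matching constant; the rest is routine once that bookkeeping is fixed, and I would keep the detailed truncation-error estimate in the supplementary material as the authors indicate.
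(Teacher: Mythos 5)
Your proposal follows essentially the same route as the paper's proof: the same Euler local/global truncation analysis giving the factor $\frac{hM_2}{2L}(e^L-1)$, the same key identification $L \geq \log C$ obtained by combining the hypothesis $\mathcal{E}_G(\vz',\hat{\vz}-\vz)>C$ with the Gronwall-type bound $\sup \mathcal{E}_G \leq e^{L}$, and the same final multiplication by $C$ for the amplification factor. The one point you flag as an obstacle — that the same constant $C$ pins down both the error accumulation (via $L=\log C$) and the final-map stretch — is resolved exactly as the paper does it, so no further comparison is needed.
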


\myparagraph{A numerical case.}
To verify the amplification effect of \phenomenon~on reconstruction error, we conduct numerical experiments on a diffusion model with a simple generation distribution. Then, we analyze the correlation between the reconstruction error and the \effectmetric. The density function of the generation distribution is illustrated in \cref{fig:recon-error}(a). Detailed settings are provided in the supplementary material.

The experimental results are presented in \cref{fig:recon-error}(b). We observe a significant correlation between the reconstruction error $\gR(\vx)$ and the averaged \effectmetric. As the averaged \effectmetric~increases, the reconstruction error exhibits a noticeable upward trend. This positive correlation corroborates our hypothesis that the \phenomenon~amplifies reconstruction errors.

\myparagraph{Verification on text-to-image diffusion models.}
In practical text-to-image diffusion models, significant reconstruction errors can occasionally occur. To verify that these reconstruction errors are still related to the \effectmetric, we performed a correlation analysis between the reconstruction error $\gR(\vx)$ and the expansion coefficient $\gE$ on the Stable Diffusion 3.5 model~\citep{esser2024scaling}. Detailed experimental settings are provided in the supplementary material.

The experimental results are illustrated in \cref{fig:recon-error}(c). It demonstrate that the reconstruction error $\gR(\vx)$ is significantly positively correlated with the averaged \effectmetric~$\gE$ in Stable Diffusion 3.5. This finding validates our hypothesis that \phenomenon~contributes to reconstruction inaccuracies in real-world diffusion models.
In the supplementary material, we provide more visual examples of reconstruction failure cases.
\section{Probabilistic guarantee on the occurrence of \phenomenon} \label{sec:analysis}
In this section, provide theoretical evidence on \phenomenon~during the ideal diffusion reconstruction process.
First, we establish a general probability lower bound without specific assumptions in \cref{subsec:analysis-prob-lower-bound}.
For further in-depth analysis, we then discuss the characteristics of the distributions involved in diffusion reconstruction and propose corresponding assumptions in \cref{subsec:prob-analysis-setting}.
Finally, based on the distribution assumptions, we provide a theoretical analysis indicating the non-zero probability of \phenomenon~in \cref{subsec:prob-analysis}.

\subsection{Probability lower bound as intuitive evidence for \phenomenon}
\label{subsec:analysis-prob-lower-bound}

\myparagraph{Definition of additional instability metric.}
Before the formal analysis, we first define \textit{the geometric average of \intrinsiceffectmetric} as an indicator of \phenomenon.

\begin{definition}[Geometric average of \intrinsiceffectmetric] \label{def:geo-avg-effect-coef}
Suppose $F: \sR^n \to \sR^n$ is a continuously differentiable mapping. For any $\vx \in \sR^n$, we define the \textit{geometric average of \intrinsiceffectmetric}~as
    \begin{equation}
        \bar{\gE}_F(\vx) \defeq \left(\prod_{i=1}^n \gE_F(\vx, \vu_i)\right)^{\frac 1 n},
    \end{equation}
where $\vu_1, \cdots, \vu_n$ is a set of right singular vectors of the Jacobian matrix $J_F(\vx)$. Note that $\bar{\gE}_F(\vx)$ is also the geometric average of all singular values of $F$'s Jacobian $J_F(\vx)$.
\end{definition}
When $\bar{\gE}_F(\vx) > 1$, at least one of $\gE_F(\vx, \vu_i)$ is also greater than one, thereby indicating the \phenomenon.

\myparagraph{Lower bound of \phenomenon's probability.}
Applying the above instability metric to the diffusion generation mapping $G$, we can now present the following theorem:
\begin{restatable}[Probability lower bound of \phenomenon]{theorem}{THMproplowerbound}
\label{thm:prob-lower-bound}
    Suppose $G$ is the ideal diffusion generation mapping defined in \cref{def:diff-map}, whose generation distribution is denoted as $\gendist$.
    Let $G^{-1}$ denote its inverse mapping, which represents the ideal diffusion inversion mapping.
    Further suppose we sample the initial data $\vx$ from some underlying real distribution $\realdist$ for reconstruction. 
    Then, for any $M > 0$, we have
    \begin{align}
        \label{eq:prob-lower-bound}
        \instableprob &\defeq \realdist\left(\{\vx: \bar{\gE}_G(G^{-1}(\vx)) > M \}\right) \geq 1 - \epsilon - \delta,\\
        \label{eq:epsilon}
        \epsilon &\defeq \realdist\left(\left\{\vx: \pgen(\vx) 
        \geq \frac{1}{(2\pi M^2)^{\frac n 2}}e^{-\frac{2n+3\sqrt{2n}}{2}}\right\}\right), \\
        \label{eq:delta}
        \delta &\defeq \realdist(\{\vx: \Vert G^{-1}(\vx) \Vert^2 > 2n+3\sqrt{2n}\}),
    \end{align}
    where $\instableprob \defeq \realdist\left(\{\vx: \bar{\gE}_G(G^{-1}(\vx)) > M \}\right)$ represents the probability of \phenomenon~in the ideal diffusion reconstruction if $M > 1$, and $\pgen$ denotes the probability density function of $\gendist$.

    More specifically, $\instableprob$ describes the probability that the geometric average of \intrinsiceffectmetric~$\bar{\gE}_G(\vz)$ is greater than the threshold $M$ on the inverted noise $\vz=G^{-1}(\vx)$, where $\vx$ is sampled from the underlying real distribution $\realdist$.
\end{restatable}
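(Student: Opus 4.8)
The plan rests on the observation that the geometric-average instability coefficient $\bar{\gE}_G$ is exactly the local volume-distortion factor of $G$, and that this factor is pinned down by the push-forward identity relating $\priordist$ and $\gendist$. Concretely, I would: (i) rewrite $\bar{\gE}_G$ as an $n$-th root of a Jacobian determinant; (ii) express that determinant, via change of variables, as a ratio of densities; (iii) turn the resulting instability event into an explicit density sublevel set; and (iv) lower-bound its $\realdist$-probability by a two-term union bound, from which $\epsilon$ and $\delta$ can be read off.

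For step (i), by \cref{def:geo-avg-effect-coef} the quantity $\bar{\gE}_G(\vz)$ is the geometric mean of the right-singular-vector expansion coefficients, i.e. the geometric mean of the singular values $\sigma_1(\vz),\dots,\sigma_n(\vz)$ of the $n\times n$ matrix $J_G(\vz)$; since $\prod_i \sigma_i(\vz) = |\det J_G(\vz)|$, this gives $\bar{\gE}_G(\vz) = |\det J_G(\vz)|^{1/n}$. For step (ii), since the ideal generation map transports $\priordist = \gaussiandist$ to $\gendist$, the change-of-variables formula yields, for $\vx = G(\vz)$ (note $\pgen(\vx)>0$ on the image of $G$),
\begin{equation*}
    |\det J_G(\vz)| \;=\; \frac{\pprior(\vz)}{\pgen(\vx)} \;=\; \frac{(2\pi)^{-n/2}\,e^{-\Vert \vz\Vert^2/2}}{\pgen(\vx)} .
\end{equation*}
Substituting $\vz = G^{-1}(\vx)$ and using $M>0$, step (iii) produces the exact rewriting
\begin{equation*}
    \bigl\{\vx : \bar{\gE}_G(G^{-1}(\vx)) > M\bigr\}
    \;=\;
    \Bigl\{\vx : \pgen(\vx) < \tfrac{1}{(2\pi M^2)^{n/2}}\,e^{-\Vert G^{-1}(\vx)\Vert^2/2}\Bigr\}.
\end{equation*}

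For step (iv), I would discard the $\vx$-dependence in the right-hand side at the cost of a conditioning event: on $B \defeq \{\vx : \Vert G^{-1}(\vx)\Vert^2 \le 2n + 3\sqrt{2n}\}$ one has $e^{-\Vert G^{-1}(\vx)\Vert^2/2} \ge e^{-(2n+3\sqrt{2n})/2}$, so with $A \defeq \{\vx : \pgen(\vx) \ge (2\pi M^2)^{-n/2}\, e^{-(2n+3\sqrt{2n})/2}\}$ we get $A^c \cap B \subseteq \{\vx : \bar{\gE}_G(G^{-1}(\vx)) > M\}$. Taking $\realdist$-probabilities and applying $\realdist(A^c \cap B) \ge 1 - \realdist(A) - \realdist(B^c)$ gives $\instableprob \ge 1 - \epsilon - \delta$ with $\epsilon = \realdist(A)$ and $\delta = \realdist(B^c)$, which are exactly \cref{eq:epsilon,eq:delta}. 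The particular threshold $2n+3\sqrt{2n}$ plays no role in the inequality itself; it is chosen so that $\delta$ is small whenever $\realdist$ resembles $\priordist$ (by $\chi^2_n$ concentration, $\E\Vert\vz\Vert^2 = n$, $\Var\,\Vert\vz\Vert^2 = 2n$) while keeping $\epsilon$ controlled, but nothing above uses this.

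The one point requiring care, rather than a genuine obstacle, is regularity: the argument presumes the ideal $G$ is a $C^1$ diffeomorphism of $\sR^n$, so that $\gendist$ admits a density $\pgen$ and the pointwise Jacobian identity is legitimate; this follows from standard well-posedness of the flow \cref{eq:pf-ode}, and measurability of $A$ and $B$ is immediate from continuity of $\pgen$ and $G^{-1}$. Granting that, the proof reduces to the three displayed identities plus the union bound, so I expect the only substantive step to be the conceptual one of recognizing $\bar{\gE}_G = |\det J_G|^{1/n}$ and feeding it the push-forward formula.
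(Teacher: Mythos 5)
Your proposal is correct and follows essentially the same route as the paper: both identify $\bar{\gE}_G(\vz)^n$ with the density ratio $\pprior(\vz)/\pgen(G(\vz))$ and then apply a two-event union bound separating the $\pgen$-threshold event from the $\Vert G^{-1}(\vx)\Vert^2$-threshold event (the paper's Lemma on the ratio $\pgen/\gamma$ is exactly your step (iv)). The only cosmetic difference is that the paper derives the Jacobian--density identity by integrating $\nabla\cdot\vv$ along the flow via the continuity equation rather than invoking the global change-of-variables formula directly.
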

Here, $\epsilon$ denotes the probability that the density function $\pgen(x)$ of $\gendist$ at a real data $x$ greater than a threshold $ \frac{1}{(2\pi M^2)^{\frac n 2}}e^{-\frac{2n+3\sqrt{2n}}{2}}$ , while $\delta$ represents the probability that the inverted noise $G^{-1}(\vx)$ of a real data significantly deviates from the center.
When both $\epsilon$ and $\delta$ are small enough, and $M>1$, we can conclude that $\instableprob > 0$, and the \phenomenon~of the diffusion generation mapping is probable to happen during the ideal reconstruction process.

In the next two subsections, we will first make reasonable assumptions based on the characteristics of image data and practical considerations in \cref{subsec:analysis-prob-lower-bound}, and then provide an asymptotic guarantee for the occurrence of \phenomenon~in \cref{subsec:prob-analysis}.

\subsection{Setting discussions for in-depth analysis} \label{subsec:prob-analysis-setting}

To further analyze the probability $\instableprob$ of \phenomenon~during the diffusion reconstruction process, we first need to make reasonable assumptions about the two probability distributions, $\realdist$ and $\gendist$, involved in \cref{thm:prob-lower-bound}. In the following sections, we will discuss these two distributions separately.

\subsubsection{Discussions on the properties of $\realdist$.}
The distribution $\realdist$ represents the distribution of images to be reconstructed and is not constrained by the diffusion model itself. We can assume that any real-world image may be used for reconstruction, so we refer to $\realdist$ as the \textit{real distribution}.

Based on this analysis and the inherent properties of image data, the real distribution $\realdist$ is characterized by the following two features:
\begin{enumerate}[leftmargin=2em,topsep=0ex,itemsep=0ex]
    \setlength{\parskip}{0ex}
    \item \textbf{Support as a cube}: The support of $\realdist$ is a hypercube in $\mathbb{R}^n$. After normalization, it can be assumed that $\supp(\realdist) = [0,1]^n$. This is because each pixel in an image has bounded values, typically normalized to the interval $[0, 1]$, ensuring that images in $n$-dimensional space reside within the hypercube $[0,1]^n$.

    \item \textbf{Positive minimum density}: The density function $\preal$ of $\realdist$ has a positive minimum value across its support. This implies that every point within $[0,1]^n$ corresponds to a possible image, including those that may be uncommon or represent noise-like structures. Although such images have extremely low probability density, they remain potential members of $\realdist$.
\end{enumerate}

\begin{assumption}[$\realdist$ -- Support as a cube] \label{assump:real-dist-cube}
The support of $\realdist$ is the $n$-dimensional hypercube $[0,1]^n$, i.e., $\supp(\realdist) = [0,1]^n$.
\end{assumption}

\begin{assumption}[$\preal$ -- Each pixel get a chance] \label{assump:real-density-lower-bound}
The density function $\preal$ of $\realdist$ is continuous on $\supp(\realdist)$, and $\preal$ further satisfies $\forall \vx \in [0,1]^n$, $\preal(\vx) \geq C_0 > 0$.
\end{assumption}

\subsubsection{Discussions on the properties of $\gendist$.}
The generation distribution $\gendist$ refers to the distribution of samples generated by the diffusion model when it starts from Gaussian noises.
For complex and high-dimensional image data, \textit{$\gendist$ typically exhibits sparsity characteristics, meaning it contains several high-probability regions surrounded by areas of lower probability}\TOCITE{}. In theoretical analyses, the mixture of Gaussians is a common choice for modeling such multi-modal distributions\TOCITE{}. However, in this work, to better differentiate between high and low probability regions, we construct a more general distribution model named as the \textit{mixture of Gaussian neighbors}. The assumption on the density function $\pgen$ of $\gendist$ is provided here:

\begin{assumption}[$\pgen$ -- Mixture of Gaussian neighbors] \label{assump:gen-dist-mixture}
The density function $\pgen$ can be expressed as $\pgen = \sum_{i=1}^m a_i f_i * g_{w_i}$,
where each $f_i$ is a probability density function supported on the open set $O_i$, $g_{w_i} = \gN(0, w_i^2 I)$ is a Gaussian kernel with standard deviation $w_i$, $*$ denotes convolution operator, $m \in \sZ^+$, and the coefficients $a_i$ satisfy $a_i > 0$ for all $i = 1, \dots, m$ and $\sum_{i=1}^m a_i = 1$.
\end{assumption}
In this density function, each density function $f_i$ on open set $O_i$ captures the high density region, while the convolution with the Gaussian kernel $g_{w_i}$ ensures that the surrounding areas have smoothly decreasing probabilities, and thus models the low density region. This approach allows for a flexible representation of complex, multi-modal distributions commonly encountered in high-dimensional image data. The following theorem further demonstrate the approximation capability of the mixture of Gaussian neighbors.

\begin{restatable}[Mixture of Gaussian neighbors are Universal Approximators]{theorem}{THMuniversalapprox} \label{thm:mix-of-gaussian-neighbor-approx}
    The set of density functions $\{p : p = \sum_{i=1}^m a_i f_i * g_{w_i},\text{$f_i$ is compactly supported continuous function}\}$ is a dense subset of continuous density function in both $L^2$ metric~\citep{folland1999real} and L\'{e}vy-Prokhorov metric~\citep{billing}.
\end{restatable}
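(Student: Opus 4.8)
**Proof proposal for Theorem~\ref{thm:mix-of-gaussian-neighbor-approx} (Universal approximation by mixtures of Gaussian neighbors).**

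The plan is to prove density with respect to the $L^2$ metric first, and then deduce density in the Lévy–Prokhorov metric essentially for free, since $L^2$ convergence of density functions (on a common bounded support, or with a uniform tail control) implies $L^1$ convergence, hence total-variation convergence, hence weak convergence, which metrizes the Lévy–Prokhorov distance on probability measures. So the heart of the matter is the $L^2$ claim: given a continuous probability density $p$ and $\varepsilon>0$, I want to produce $\sum_{i=1}^m a_i f_i * g_{w_i}$ with $a_i>0$, $\sum a_i = 1$, each $f_i$ a compactly supported continuous density, lying within $\varepsilon$ of $p$ in $L^2$.

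The key steps, in order. First, \textbf{truncation}: approximate $p$ in $L^2$ by a compactly supported continuous density $\tilde p$ — multiply $p$ by a smooth cutoff that is $1$ on a large ball $B_R$ and $0$ outside $B_{R+1}$, and renormalize; for $R$ large this is $L^2$-close to $p$ because $p\in L^1\cap L^\infty$ (continuous density, hence bounded on compacta; and $\|p\|_2^2 \le \|p\|_\infty \|p\|_1$ once we note $p$ continuous need not be bounded globally — so I may first need a preliminary step observing that a continuous density is automatically in $L^2$ after the truncation, which is all we use). Second, \textbf{mollification / deconvolution}: I want to write $\tilde p \approx f * g_w$ for a single small $w$. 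The natural move is to recall that $g_w * h \to h$ in $L^2$ as $w\to 0$ for any $h\in L^2$ (approximate identity), but that gives $g_w * \tilde p \approx \tilde p$, which is the wrong direction — I need $\tilde p$ itself to be a convolution $f*g_w$, i.e.\ I need to \emph{deconvolve}. The clean fix is to set $f := \tilde p$ directly and take $w\to 0$: then $f * g_w \to f = \tilde p$ in $L^2$, and $f$ is a compactly supported continuous density. That already gives a \emph{one-term} ($m=1$, $a_1=1$) approximant $f * g_{w}$ with $f$ compactly supported continuous, within $\varepsilon$ of $p$. Third, \textbf{normalization bookkeeping}: convolution with a probability kernel preserves total mass, so $f*g_w$ is automatically a probability density when $f$ is; and $a_1 = 1 > 0$, $\sum a_i = 1$ are trivially satisfied, so no rescaling is needed and no $L^2$ error is introduced by normalization. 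Fourth, \textbf{from $L^2$ to Lévy–Prokhorov}: all the measures involved are supported in a fixed bounded region (the ball $B_{R+1}$ enlarged by the $O(w)$ spread of the kernel, uniformly bounded for $w\le 1$), so $\|q_1 - q_2\|_{L^1} \le |B|^{1/2}\|q_1-q_2\|_{L^2}$; $L^1$-closeness of densities gives total-variation closeness of the measures, total variation dominates Lévy–Prokhorov, done.

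The main obstacle I anticipate is subtler than the sketch above suggests: the statement of Assumption~\ref{assump:gen-dist-mixture} and of the theorem requires each $f_i$ to be supported on an \emph{open} set $O_i$ and to be a genuine density (nonnegative, integrating to one), and the ``neighbor'' structure is meant to genuinely separate a high-density core $f_i$ from a smoothly decaying halo $g_{w_i}$. The cheap proof above collapses this structure by taking $f=\tilde p$ and $w\to0$, which is legitimate for the \emph{stated} density claim but arguably evades the intended content; if the referee/authors want the $f_i$ to have small, genuinely localized supports $O_i$ (so that the mixture really looks sparse), then one must instead do a genuine partition-of-unity decomposition of $\tilde p$ into finitely many bumps $\tilde p = \sum_i a_i h_i$ with $h_i$ continuous densities supported on small open balls $O_i$, and then approximate each $h_i * g_{w}$ — and the real work becomes checking that convolving each localized bump with a common small-width Gaussian still yields $L^2$-closeness to $\sum_i a_i h_i = \tilde p$, which again follows from $g_w$ being an approximate identity applied termwise and summed with the fixed weights $a_i$. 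Either way the analytic engine is the same (approximate identity in $L^2$ plus truncation), so I expect the write-up to be short; the only place to be careful is integrability of a merely-continuous density, handled by truncating \emph{before} invoking any $L^2$ or $L^1$ bound.
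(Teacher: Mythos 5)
Your proposal is correct for the theorem as literally stated, but it takes a genuinely different and considerably more elementary route than the paper. For the $L^2$ part, the paper expands in the Hermite basis and works on the Fourier side via Plancherel, showing that functions of the form $\sum_i a_i \hat f_i\, e^{-\frac12 w_i^2\|\vx\|^2}$ can approximate each Hermite function; your truncate-then-mollify argument ($f:=\tilde p$, $m=1$, $w\to 0$, approximate identity in $L^2$) reaches the same conclusion with far less machinery, precisely because the set in the statement contains one-term mixtures. For the L\'evy--Prokhorov part, the paper does \emph{not} pass through $L^2$: it builds a genuine decomposition of the target density into pieces $a_i f_i$ supported on disjoint open sets $O_i$ (via a compact exhaustion and the structure theorem for open sets) and then verifies the two defining inequalities of the L\'evy--Prokhorov metric set by set, which has the side benefit of producing approximants that actually look like sparse ``Gaussian neighbors'' in the sense of Assumption~\ref{assump:gen-dist-mixture} and \ref{assump:sparsity-group} --- the structural point you correctly flag as being evaded by the $m=1$ shortcut. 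Your route buys brevity; the paper's buys approximants consistent with the sparsity narrative the theorem is meant to support. One small repair is needed in your step four: $f*g_w$ is \emph{not} compactly supported (the Gaussian kernel has full support), so you cannot invoke a fixed bounded region and Cauchy--Schwarz to pass from $L^2$ to $L^1$; instead, simply note that $g_w$ is an approximate identity in $L^1$ as well, so $\|f*g_w - f\|_{L^1}\to 0$ directly, and then total variation dominates the L\'evy--Prokhorov distance as you say. With that fix the argument is sound.
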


\myparagraph{Sparsity assumption group.} Combining the model's actual training process, the meaning of the mixture of Gaussian neighbors can be understood as follows:
In high-dimensional space for image data, the training samples is sparse and finite due to the limited capability to collect image. When the model is sufficiently trained on the training set and possesses certain generalization capabilities, it can generate a small neighborhood around each training sample with relatively high probability. Thus, each $O_i$ corresponds to the neighborhood that can be generated around each training sample, and $m$ represents the number of training samples.
Considering the sparsity of high-dimensional images and the model's limited generalization capabilities, we make the following group of assumptions:
\begin{assumption}[Sparsity assumption group]\label{assump:sparsity-group}
\ 
\begin{enumerate}[leftmargin=2em,topsep=0ex,itemsep=0ex]
    \setlength{\parskip}{0ex}
\item (Finite training set) We assume that $m$ has an upper bound as the dimension $n$ increases. For convenience, we directly assume that $m$ is a constant.
    
\item (Sparse data in high-dimensional space)
The neighborhoods $O_i$ and $O_j$ do not overlap for any $i \neq j$, and there exists a minimum positive distance $d_{\min} > 0$ between any two distinct neighborhoods $O_i$ and $O_j$. For convenience, we define $\bar{d}_\min \defeq d_{\min} / \sqrt{n}$.

\item (Finite generalization capability)
Each neighborhood $O_i$ is included in a hypercube $B_i$ of side length $b_i \ll 1$.
    
\item (Low probability region) For each $f_i$, the probability outside its corresponding $O_i$ is upper bounded by a constant $1 - \alpha_i$, \textit{i.e.}, $\int_{O_i^C} f_i * g_{w_i} (\vx) \dd \vx < 1 - \alpha_i < 1$.
\end{enumerate}
\end{assumption}
    
These assumptions formalize the intuition that in high-dimensional spaces, training samples are finite, sparse, and each training sample can generate a distinct, small neighborhood without overlap. Additionally, the probability density outside these neighborhoods is uniformly bounded, ensuring that low-density regions do not dominate the generation process.

\subsection{Asymptotic proof of almost sure \phenomenon} \label{subsec:prob-analysis}
Building upon the assumptions on the relevant distributions, in this subsection, we will present a more in-depth asymptotic analysis on $\instableprob$, demonstrating that \phenomenon~occurs almost surely during diffusion reconstruction when the dimensionality $n$ tends to infinity.
Regarding real-world high-dimensional images, this asymptotic result implies that \phenomenon~will occur with a non-negligible probability within the reconstruction process.

Recall that in \cref{thm:prob-lower-bound}, we have proved that the \phenomenon~probability $\instableprob \geq 1 - \epsilon - \delta$, where $M$ is the threshold of an \phenomenon~indicator $\bar{\gE}_G$ defined in \cref{def:geo-avg-effect-coef}.
To prove a almost sure \phenomenon~as $n \to \infty$, it is sufficient to prove that $\epsilon \to 0$ and $\delta \to 0$ when $M > 1$.
\cref{thm:main} exactly supports this claim. The proof is provided in the supplementary material.
\begin{restatable}{theorem}{THMmain} \label{thm:main}
    Consider $\epsilon$ and $\delta$ defined in \cref{eq:epsilon,eq:delta} in \cref{thm:prob-lower-bound}. Suppose that \cref{assump:real-dist-cube,assump:real-density-lower-bound,assump:gen-dist-mixture} and \cref{assump:sparsity-group}--the sparsity assumption group--hold.
    When $n \to \infty$, if $M$ satisfies $M < M_0$, we have
    \begin{align}
        \label{eq:epsilon-lim}
        \epsilon &\defeq \realdist\left(\left\{\vx: \pgen(\vx) 
        \geq \frac{1}{(2\pi M^2)^{\frac n 2}}e^{-\frac{2n+3\sqrt{2n}}{2}}\right\}\right) \to 0, \\
        \label{eq:delta-lim}
        \delta &\defeq \realdist(\{\vx: \Vert G^{-1}(\vx) \Vert^2 > 2n+3\sqrt{2n}\}) \to 0,
    \end{align}
    where $M_0 \defeq \min_{1\leq i\leq m}\exp \left(\frac{1}{8}\frac{\bar{d}_\min^2}{w_i}-\ln \frac{1}{w_i} + 2 + 3\sqrt{\frac{2}{n}} \right) \to \infty$.
    Thus, for any fixed $M > 1$, we can conclude that the \phenomenon~probability $\instableprob \to 1$ as $n \to \infty$.
\end{restatable}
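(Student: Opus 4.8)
The plan is to apply \cref{thm:prob-lower-bound}, which already gives $\instableprob \ge 1-\epsilon-\delta$ with $\instableprob$ equal to the instability probability once $M>1$. So, fixing an arbitrary $M>1$, it suffices to establish three facts: $\epsilon\to 0$, $\delta\to 0$, and $M_0\to\infty$; the last guarantees that for this fixed $M$ the hypothesis $M<M_0$ (and hence the bounds on $\epsilon,\delta$) holds for all large $n$. The statement $M_0\to\infty$ is the easy one: unpacking \cref{assump:sparsity-group}, the normalized separation $\bar{d}_{\min}$ stays bounded below while the widths $w_i$ shrink, so inside the $\exp(\cdot)$ defining $M_0$ the term $\tfrac18\bar{d}_{\min}^2/w_i$ diverges and dominates $-\ln(1/w_i)$ and the vanishing term $2+3\sqrt{2/n}\to 2$.

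For $\epsilon\to 0$, write $\tau\defeq (2\pi M^2)^{-n/2}e^{-(2n+3\sqrt{2n})/2}$ for the density threshold in \cref{eq:epsilon}. The core step is to localize the super-level set $\{\pgen\ge\tau\}$. By \cref{assump:gen-dist-mixture} and the fact that each $f_i$ is a probability density supported in the cube $B_i$ (\cref{assump:sparsity-group}), for every $\vx$
\begin{equation*}
\pgen(\vx)=\sum_{i=1}^m a_i\!\int_{B_i}\! f_i(\vy)\,g_{w_i}(\vx-\vy)\,\dd\vy\;\le\;\sum_{i=1}^m\frac{a_i}{(2\pi w_i^2)^{n/2}}\exp\!\Big(-\frac{\mathrm{dist}(\vx,B_i)^2}{2w_i^2}\Big),
\end{equation*}
so $\pgen(\vx)\ge\tau$ forces at least one summand to be $\ge\tau/m$, which after taking logarithms becomes $\mathrm{dist}(\vx,B_i)^2\le R_i^2$ with $R_i^2=w_i^2\big(2n[\ln(M/w_i)+1]+3\sqrt{2n}+2\ln(m a_i)\big)$. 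Hence $\{\pgen\ge\tau\}\subseteq\bigcup_{i=1}^m\{\vx:\mathrm{dist}(\vx,B_i)\le R_i\}$, and the $i$-th set lies inside an $\ell^\infty$-cube of side $b_i+2R_i$. The hypothesis $M<M_0$, together with the scalings of $w_i,b_i,d_{\min}$ in \cref{assump:sparsity-group}, is exactly what forces $b_i+2R_i\to 0$ for each $i$; therefore $\mathrm{Leb}(\{\pgen\ge\tau\})\le\sum_{i=1}^m(b_i+2R_i)^n\to 0$. Finally, since $\realdist$ is supported on $[0,1]^n$ (\cref{assump:real-dist-cube}) with a density that is continuous---hence bounded on this compact cube---and bounded below by $C_0>0$ (\cref{assump:real-density-lower-bound}), the $\realdist$-mass of a subset of the cube is controlled by its Lebesgue measure, so $\epsilon=\realdist(\{\pgen\ge\tau\})\to 0$.

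For $\delta\to 0$, recall $\delta=\realdist(\{\vx:\Vert G^{-1}(\vx)\Vert^2>2n+3\sqrt{2n}\})$. Because $G$ pushes $\gaussiandist$ forward onto $\gendist$, the event $\Vert G^{-1}(\vx)\Vert^2>2n+3\sqrt{2n}$ corresponds, under the Gaussian, to $\Vert\vz\Vert^2>2n+3\sqrt{2n}$; since $2n+3\sqrt{2n}$ sits a constant factor above the mean $n=\E\Vert\vz\Vert^2$, this is exponentially unlikely in $n$ by a standard $\chi^2_n$ concentration (Laurent--Massart/Bernstein) bound. Transferring this back to the $\realdist$-side---again via the boundedness of the density of $\realdist$ on $[0,1]^n$ and the fact that the mass of $\gendist$ concentrates near the tiny cubes $B_i$, so that $G^{-1}([0,1]^n)$ lies, up to a set of vanishing measure, inside the ball $\{\Vert\vz\Vert^2\le 2n+3\sqrt{2n}\}$---gives $\delta\to 0$. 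Combining $\epsilon\to0$, $\delta\to0$ and $M_0\to\infty$ with \cref{thm:prob-lower-bound} then yields $\instableprob\ge 1-\epsilon-\delta\to 1$ for every fixed $M>1$.

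I expect the quantitative bookkeeping in the $\epsilon$ step to be the main obstacle: one must verify that $M<M_0$, jointly with the sparsity scalings, really does pin the thickened radii $R_i$ into the regime $b_i+2R_i<1$ uniformly over the $m$ modes and all large $n$---this is precisely what converts the exponentially small threshold $\tau$ into an exponentially small volume. The second delicate point, shared by both parts, is the passage from Lebesgue (or Gaussian) measure to $\realdist$-measure, which rests only on the density bounds for $\realdist$ rather than on any direct comparison between $\realdist$ and $\gendist$.
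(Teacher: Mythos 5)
Your $\epsilon$ argument follows essentially the same route as the paper (dominate $\pgen$ by Gaussian tails emanating from the cubes $B_i$, localize the super-level set $\{\pgen\geq\tau\}$ into tubular neighborhoods, bound their volume, and transfer to $\realdist$ via density bounds; the paper uses the Steiner formula where you use an $\ell^\infty$-cube, which is immaterial). But there is a genuine gap at the point you label ``the easy one'': you assert that ``the widths $w_i$ shrink'' as if this were stated in \cref{assump:sparsity-group}. It is not. The paper \emph{derives} $w_i\to 0$ from the low-probability-region item: $\alpha_i<\int_{O_i}f_i*g_{w_i}\leq\bigl(\int_{-b_i}^{b_i}g_{w_i}^{\mathrm{1D}}\bigr)^{n}$, and since the one-dimensional integral is strictly below $1$ its $n$-th power can stay above the fixed $\alpha_i$ only if that integral tends to $1$, forcing $w_i\to 0$ as $n\to\infty$. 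This is the only place the assumption $\int_{O_i^C}f_i*g_{w_i}<1-\alpha_i$ is used, and it is what makes $M_0\to\infty$ and $R_i$ small; without it your claims that $M_0\to\infty$ and that $b_i+2R_i$ stays below $1$ are unsupported. (A secondary soft spot, shared with the paper, is that continuity of $\preal$ gives an upper bound on the density only for each fixed $n$, not uniformly in $n$; the paper patches this with a separate limiting claim about the ratio $C/C_0$.)

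The second, more serious gap is in the $\delta$ step. The $\chi^2_n$ tail bound controls $\gendist(\{\vx:\Vert G^{-1}(\vx)\Vert^2>2n+3\sqrt{2n}\})$, because $G$ pushes the Gaussian forward to $\gendist$; it says nothing directly about the $\realdist$-measure of that event. Your proposed transfer ``via the boundedness of the density of $\realdist$'' cannot work: a density bound on $\realdist$ controls $\realdist(A)$ by the \emph{Lebesgue} measure of $A$, and the Lebesgue measure of $G(\{\Vert\vz\Vert^2>2n+3\sqrt{2n}\})\cap[0,1]^n$ is not controlled by the Gaussian measure of the pre-image --- precisely because $G$ distorts volume violently (that distortion is the \phenomenon~the whole paper is about). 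The clause ``up to a set of vanishing measure'' begs the question of vanishing in \emph{which} measure. The paper instead constructs an infinite-dimensional limit: it embeds the finite-dimensional distributions $\realdist^{(n)}$ into a function space via the Kolmogorov extension theorem, shows $\realdist^{(n)}$ converges weakly, uses the monotonicity of the events $B_n=\{\Vert\vz\Vert^2\geq 2n+3\sqrt{2n}\}$, and invokes absolute continuity of the limiting $\realdist'$ with respect to the limiting $\gendist'$ to conclude $\realdist^{(n)}(B_n)\to\realdist'(B_\infty)=0$ from $\gendist'(B_\infty)=0$. Whatever one thinks of that argument's rigor, some mechanism relating $\realdist$ to $\gendist$ (or to the geometry of $G$) is indispensable here, and your proposal does not supply one.
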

The above theorem implies that, for a sufficiently large dimension $n$, such as the case of high-dimensional image data, the \phenomenon~probability can be high, significantly greater than zero.

\myparagraph{Intuitive explanations.}
To better understand \cref{thm:main} and the mechanism underlying this highly-probable \phenomenon, we revisit the roles of $\epsilon$ and $\delta$ in the lower bound of the \phenomenon~probability $\instableprob$.
The term $\epsilon \defeq \realdist\left(\left\{\vx: \pgen(\vx) 
        \geq \frac{1}{(2\pi M^2)^{\frac n 2}}e^{-\frac{2n+3\sqrt{2n}}{2}}\right\}\right)$ quantifies the probability that samples from $\realdist$ avoid low probability density regions of $\gendist$. As discussed in \cref{subsec:prob-analysis-setting}, the sparsity of $\gendist$ ensures that most regions exhibit negligible probability density, thereby driving $\epsilon$ toward zero.
Meanwhile, another term $\delta \defeq \realdist(\{\vx: \Vert G^{-1}(\vx) \Vert^2 > 2n+3\sqrt{2n}\})$ represents the probability that the inverted noise $G^{-1}(\vx)$ deviates from the Gaussian's center beyond a squared threshold $2n+3\sqrt{2n}$ — a bound that scales as infinity when dimensionality $n$ increases.
It is known that high-dimensional Gaussian samples in $\sR^n$ concentrate near a sphere with squared radius $n$~\TOCITE{}.
This concentration indicates that significant deviations are of small probability, which implies a small $\delta$.
Consequently, both two terms can be small, implying a positive \phenomenon~probability lower bound $1 - \epsilon - \delta$.
A detailed derivation is provided in the supplementary material.
\section{Related works} \label{sec:related-works}

The concepts of diffusion inversion and reconstruction were initially introduced by~\citet{song2020denoising} as an application of DDIM, and was then used in many image editing tasks~\citep{hertz2022prompt,chung2022improving,su2022dual,tumanyan2023plug}. However, many studies have identified limitations in their effectiveness for text-to-image diffusion models, particularly in terms of reconstruction quality~\citep{dhariwal2021diffusion,ho2022classifier,zhang2024exact,mokady2023null,wallace2023edict,hong2024exact,wang2025belm,dai2024erddci}.
To improve the reconstruction accuracy, some tuning-based methods are proposed to align the re-generation trajectory to the inversion trajectory\citep{mokady2023null,dong2023prompt}.
As it is recognized that numerical discretization error is the direct cause of reconstruction inaccuracy, high-order ODE solvers for diffusion models offer alternatives to mitigate this issue~\citep{lu2022dpm,lu2022dpmplus,karras2022elucidating}.
Additionally, some works introduce auxiliary variables to ensure the numerical reversibility of the diffusion inversion process~\citep{wallace2023edict,zhang2024exact,wang2024belm}.
However, it has been noted that such methods can be unstable, as small perturbations in the inverted noise may lead to significant deviations in the reconstructed image~\citep{ju2024pnp}.
Meanwhile, there remains a lack of comprehensive analysis on the underlying mechanism: why the reconstruction discrepancy can sometimes be sufficiently significant.
Different from previous works that proposes new methods, this work attempts to dive into the mechanism and advance the understanding of diffusion models.
\section{Conclusion}
\label{sec:conclusion}

In this paper, we identify the \phenomenon~phenomenon as an amplifier on the error observed in diffusion-based image reconstruction. Through rigorous theoretical analysis and comprehensive experimental validations, we demonstrate that \phenomenon~leads to the amplification of numerical perturbations during the diffusion generation process, thus increasing the reconstruction error.
Moreover, we investigate the underlying causes of \phenomenon~in the diffusion ODE generation process, demonstrating that the inherent sparsity in diffusion generation distribution can cause \phenomenon.
Meanwhile, we theoretically prove that the \phenomenon~will almost surely occur during reconstruction for infinite dimensional images.
Our work elucidates a critical source of reconstruction inaccuracies.
Addressing \phenomenon~will be essential for advancing the reliability of generative models in practical settings.

\myparagraph{Limitations.}
While our analysis establishes \phenomenon~in diffusion reconstruction as a principal contributor to significant errors, we do not preclude the coexistence of additional error mechanisms. Furthermore, the probabilistic proof of \phenomenon~under our realistic assumptions leaves open the formal characterization of necessary and sufficient conditions for \phenomenon~emergence.

\myparagraph{Broader Impacts.}
This paper aims to advance the understanding on the mechanism of probably large reconstruction error in diffusion models. While this research primarily contributes to the theoretical advancement in the field, it may also inspire future practical improvements and applications, though no immediate societal impacts are identified for specific emphasis at this time.

{
    \bibliographystyle{abbrvnat}
    \bibliography{ref}
}


\newpage
\appendix
\thispagestyle{empty}
\tableofappendix
\setcounter{page}{1}

\newcommand{\AppendixPrefix}{A}
\setcounter{section}{0}
\renewcommand{\thefigure}{\AppendixPrefix\arabic{figure}}
\setcounter{figure}{0}
\renewcommand{\thetable}{\AppendixPrefix\arabic{table}}
\setcounter{table}{0}
\renewcommand{\theequation}{\AppendixPrefix\arabic{equation}}
\setcounter{equation}{0}

\section{Pseudocode for diffusion reconstruction}

\begin{algorithm}[H]
\caption{Diffusion reconstruction}
\label{alg:recon}
\begin{algorithmic}[1]

\Statex \textbf{Input}: Initial data $\vx$, PF-ODE vector field $\vv$, ODE solver $\mathrm{ODESolver}$
\Statex \textbf{Output}: Reconstructed data $\hat{\vx}$
\Statex \# $\mathrm{ODESolver}$ gets four arguments as below
\Statex \# $\mathrm{ODESolver}(\text{initial value}, \text{derivative function},\text{initial time},\text{end time})$
\Statex
\Statex \# Diffusion inversion from $t=0$ to $t=1$
\State $\hat{\vz} \gets \mathrm{ODESolver}(\vx, \vv, 0, 1)$
\Statex \# Diffusion regeneration from $t=1$ to $t=0$
\State $\hat{\vx} \gets \mathrm{ODESolver}(\hat{\vz}, \vv, 1, 0)$
\end{algorithmic}
\end{algorithm}

To further illustrate the PF-ODE-based diffusion reconstruction procedure, we provide the pseudocode in \cref{alg:recon}.
The pseudocode demonstrates the core logic of the reconstruction porcess, utilizing an ODE solver that could be any numerical method.

\section{Additional experiments} \label{supp:exp}

\subsection{Reconstruction error amplification verified by second-order ODE solver} \label{suppsub:second-order-recon}

\begin{figure}[htbp]
    \centering
    \begin{subfigure}[b]{0.45\textwidth}
        \centering
        \includegraphics[width=\linewidth]{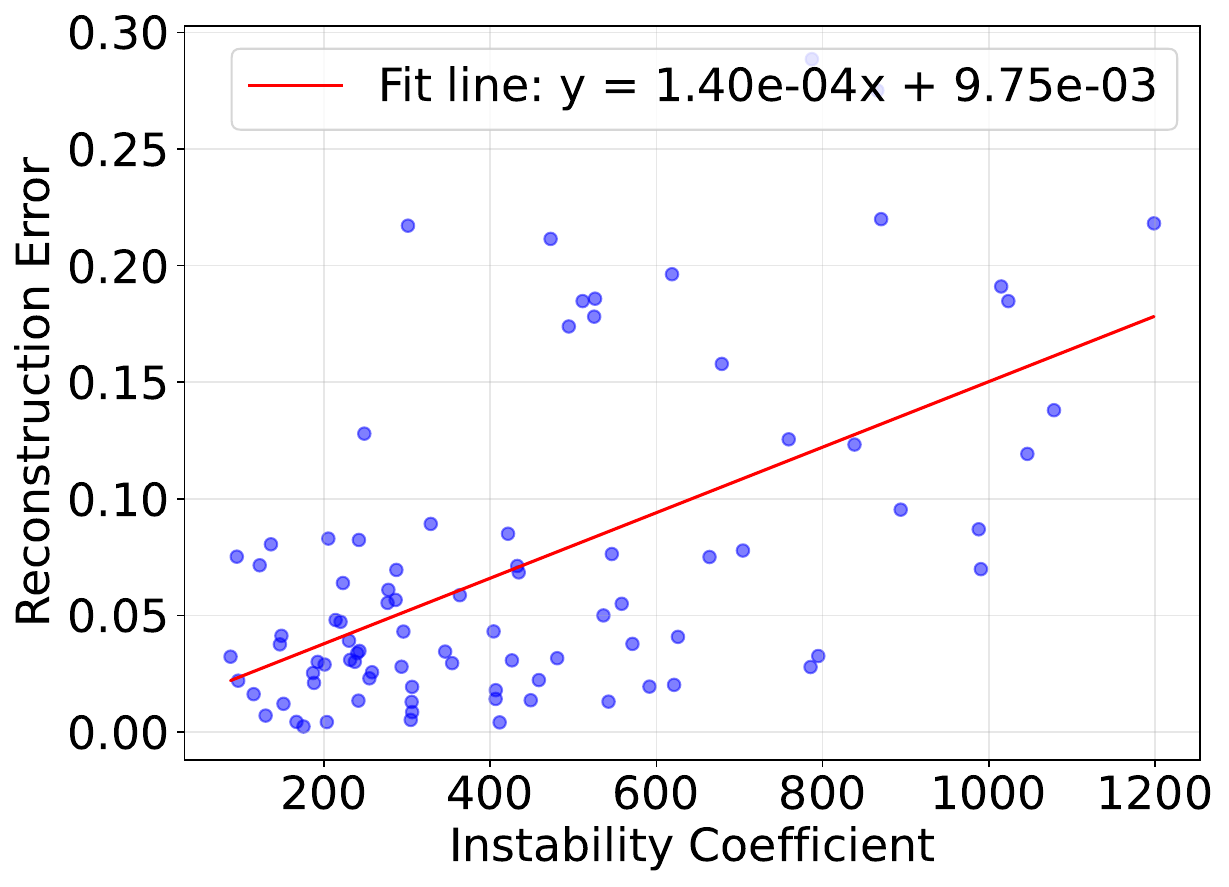} 
        \caption{Stable Diffusion 3.5 Medium~\citep{esser2024scaling}}
        \label{fig:recon_error_sd35_step500_heun}
    \end{subfigure}
    \hfill
    \begin{subfigure}[b]{0.45\textwidth}
        \centering
        \includegraphics[width=\linewidth]{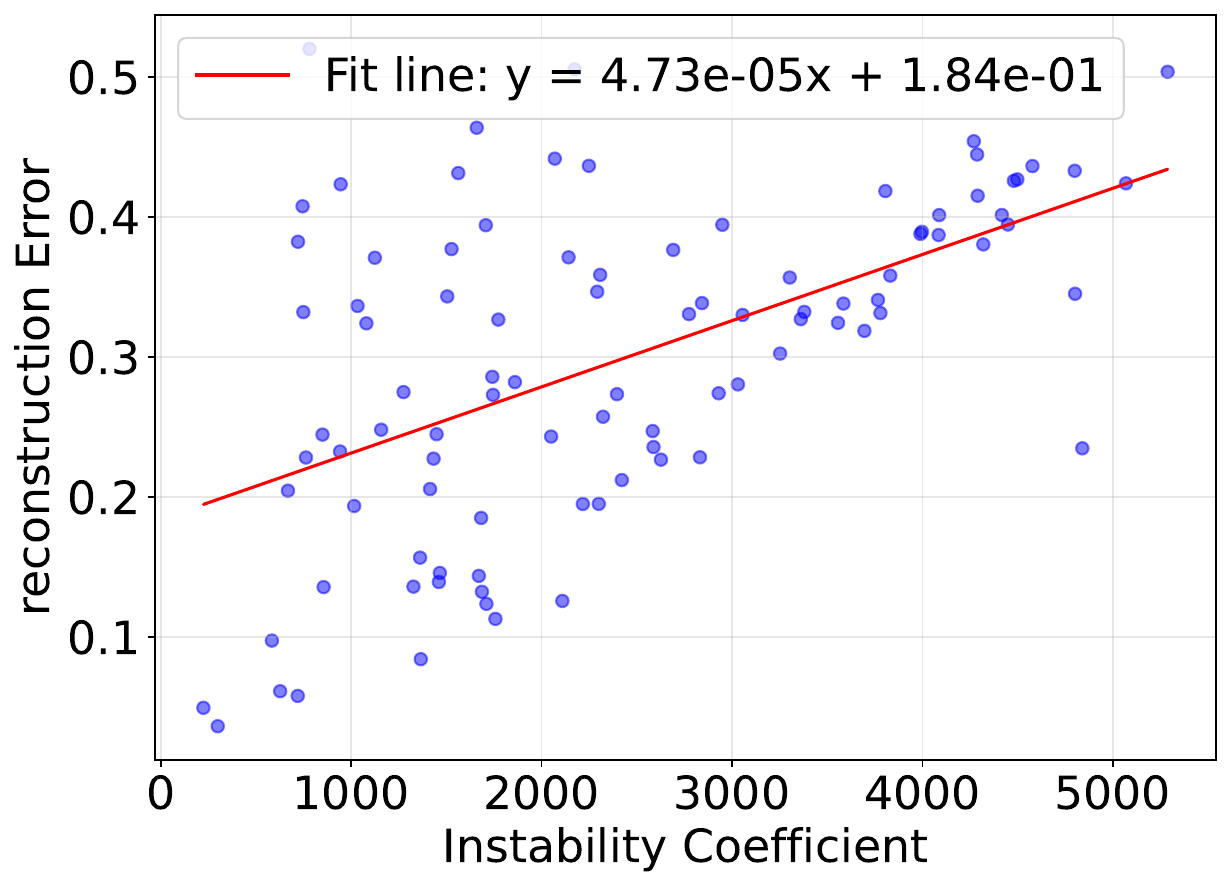} 
        \caption{Stable Diffusion 3.5 Large~\citep{esser2024scaling}}
        \label{fig:recon_error_sd35_large_step500_heun_euler}
    \end{subfigure}
    \begin{subfigure}[b]{0.45\textwidth}
        \centering
        \includegraphics[width=\linewidth]{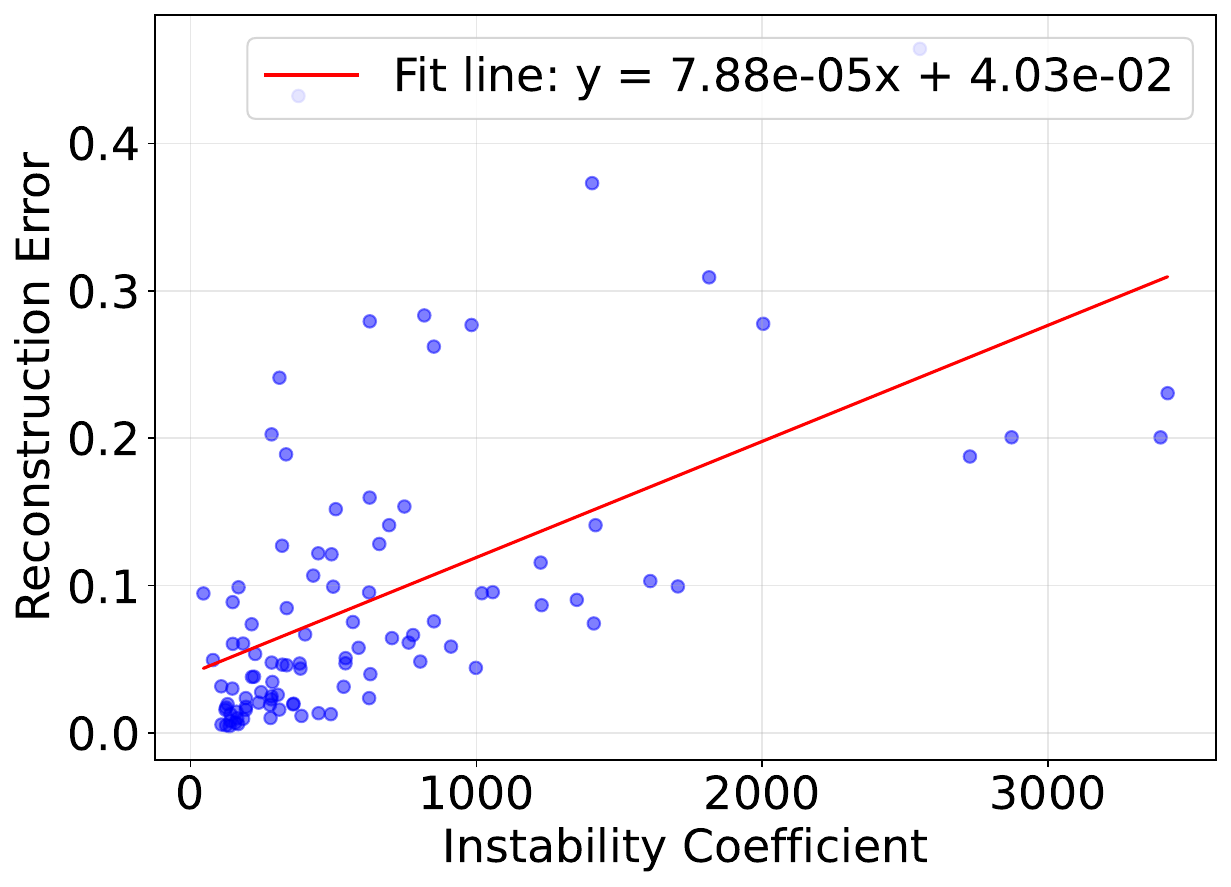} 
        \caption{FLUX~\citep{flux2024}}
        \label{fig:recon_error_flux_step500_heun_euler}
    \end{subfigure}
    \caption{
    \textbf{Positive correlation between the reconstruction error and \effectmetric~verified by second-order Heun ODE solver.} All of the three text-to-image diffusion models exhibit such positive correlation.
    Experimental details can be found in \cref{supp:exp-setting}.
    }
    \label{fig:recon_error_sd35_flux}
\end{figure}

In \cref{subsec:recon-error}, we have already demonstrated the positive correlation between the reconstruction error and the \effectmetric~of diffusion generation mapping.
Both the numerical example and the results of Stable Diffusion 3.5 Medium~\citep{esser2024scaling} in \cref{fig:recon-error} imply that the occurrence of \phenomenon~can amplify the reconstruction error.
Note that the Euler solver is adopted in~\cref{subsec:recon-error}.

Here we further emphasize that \textbf{this positive correlation between the reconstruction error and the \effectmetric~is an inherent property of PF-ODE, and does not rely on specific numerical method}.
To support this, we provide more experimental evidence under the second-order Heun ODE solver, rather than the first-order Euler solver in \cref{subsec:recon-error}.
The results are illustrated in \cref{fig:recon_error_sd35_flux}.
Here we choose three popular different text-to-image diffusion models--Stable Diffusion 3.5 Medium / Large~\citep{esser2024scaling} and FLUX~\citep{flux2024}.
It can be observed that the positive correlation between the reconstruction error and the \effectmetric~again appears in all of these three models.

\section{Discussions on the implication of \phenomenon}
Our findings reveal that diffusion generation process can exhibit \phenomenon~when reconstructing a target data, such as image. Specifically, the reconstructed result may vary dramatically under tiny perturbations on the inverted noise, indicating a high sensitivity that undermines the ability to consistently reproduce the original sample.
This behavior sheds light on an important limitation of diffusion models’ generalization capability. While diffusion models are often lauded for their strong performance in generating realistic samples, the observed \phenomenon~implies that they may not have fully learned the underlying structure of the data distribution.

From a theoretical perspective, as discussed in \cref{sec:analysis}, such \phenomenon~can be attributed to the sparsity of the generation distribution: the generation distribution concentrates its probability mass on scattered, small regions in the high-dimensional space, failing to provide robust coverage over the broader data manifold.
And this sparsity highlights the model’s limited capacity to generalize beyond the concentrated regions. These insights underscore the need for further research into improving how diffusion models capture the global data distribution, mitigate distributional sparsity, and enhance the stability of their generation process. By addressing these issues, future advancements can potentially expand the scope and reliability of diffusion-based generative models.

\section{Detailed proofs} \label{supp:proofs}

\subsection{Proof of \cref{prop:effect}} \label{suppsub:proof-thm-effect}
For convenience, we first repeat \cref{prop:effect} to be proved here.
\PROPeffect*

\begin{proof}
    Given that the operator norm of the Jacobian matrix $J_F(\vx)$ satisfies $\|J_F(\vx)\|=\sup_{\vu'\in \mathbb R^n}\frac{\|J_F(\vx)\vu'\|}{\|\vu'\|}\geq\mathcal E_F(\vx,\vu)$ for any $\vx,\vu \in \sR^n$, there exists a unit vector $\vh$ such that
    \begin{equation} \label{eq:norm-ineq}
    \|J_F(\vx)\vh\|\geq \mathcal E_F(\vx,\vu).
    \end{equation}
    Let $\vn_t=t\vh$, we have
    \begin{equation}
        F(\vx+\vn_t)-F(\vx)=J_F(\vx) \vn_t + R(\vn_t),
    \end{equation}
    where $R(\vn_t)$ is the remainder. Then $\gA_F(\vx, \vn_t)$ is bounded below by:
    \begin{align}
        \gA_F(\vx, \vn_t) &\defeq \frac{\|F(\vx+\vn_t)-F(\vx) \|}{\|\vn_t\|}\\
        &\geq \frac{\|J_F(\vx) \vn_t \|-\|R(\vn_t)\|}{\|\vn_t\|}\\
        &\geq \frac{t\mathcal E_F(\vx,\vu)-\|R(\vn_t)\|}{\|\vn_t\|}\\
        &=\mathcal E_F(\vx,\vu)-\frac{\|R(\vn_t)\|}{\|\vn_t\|},
    \end{align}
    where the first $\geq$ is by triangle inequality, and the second $\geq$ is by \cref{eq:norm-ineq}.
    
    Since $F$ is differentiable, the remainder $R(\vn_t)$ satisfies $\lim_{t\rightarrow0}\frac{\|R(\vn_t)\|}{\|\vn_t\|}=0$.
    Then for $\epsilon = \frac{\Delta}{1+\Delta}\mathcal E_F(\vx,\vu)$, there exist $\delta>0$ and $t\leq \delta$ such that $\frac{\|R(\vn_t)\|}{\|\vn_t\|}\leq \epsilon$, and this can lead to the conclusion
    \begin{equation}
        \frac{\|F(\vx+\vn_t)-F(\vx) \|}{\|\vn_t\|}\geq \mathcal E_F(\vx,\vu)-\frac{\|R(\vn_t)\|}{\|\vn_t\|}\geq \frac{\mathcal E_F(\vx,\vu)}{1+\Delta}.
    \end{equation}
    Furthermore, by the arbitrariness of $\Delta$, when $\mathcal E_F(\vx,\vu)>1$, there always exists a $\Delta$ satisfies $\mathcal E_F(\vx,\vu)>1+\Delta$. Thus $\gA_F(\vx, \vn_t) \geq \frac{\mathcal E_F(\vx,\vu)}{1+\Delta} > 1$.
\end{proof}

\subsection{Proof of \cref{thm:recon-error}} \label{suppsub:proof-thm-recon-error}
For convenience, we first repeat \cref{thm:recon-error} to be proved here.
\THMreconerror*
\begin{proof}
    The reconstruction process involves two sequential steps. First, the probability flow ODE in \cref{eq:pf-ode} is integrated forward from $t=0$ to $t=1$ using the Euler method, yielding the noise corresponding to the image. Then, the same ODE is integrated backward from $t=1$ to $t=0$, starting from the noise, to recover the reconstructed image. We use the reconstruction process to evaluate the Lipschitz constant of $\vv(\vx,t)$ and to analyze the error between the inversion and reconstruction procedures.
    The form of the probability flow ODE solved using the Euler method in the inversion process is:
    \begin{equation}
        \vx_{t_{n+1}}=\vx_{t_n}+(t_{n+1}-t_n) \vv(\vx_{t_n},t_n).
    \end{equation}
    The error analysis of Euler's numerical solution involves the estimation of local truncation error (LTE) and global truncation error (GTE), which together determine the method's overall accuracy.

    The local truncation error measures the error introduced in a single step of the Euler method. Denoting the time step $t_{n+1}-t_n$ as $h$, by expanding the true solution $\vx(t)$ at time $t_{n+1}$ using a Taylor series around $t_n$, we obtain:
    \begin{equation}
        \vx (t_{n+1})=\vx (t_n)+h \vx'(t_n)+\frac{h^2}{2}\vx''(\xi),
    \end{equation}
    where $\xi\in[t_n,t_{n+1}]$. Since $\vx'(t_n) = \vv(\vx_{t_n},t_n)$, the local truncation error is given by assuming $\vx_{t_n}=\vx(t_n)$:
    \begin{equation}
        \textit{LTE} \defeq \vx (t_{n+1})-\vx_{t_{n+1}}=\frac{h^2}{2}\vx''(\xi).
    \end{equation}
    If the second derivative of $\vx$ is bounded, that is, $|\vx''|\leq M_2$, then the error bound is 
    \begin{equation}
        \textit{LTE}\leq \frac{1}{2}M_2 h^2.
    \end{equation}
    The global truncation error measures the accumulated error over multiple steps, and the total error can be estimated by summing up the contributions from each step.
    Let $L$ denote the Lipschitz constant of $\vv$, then the global truncation error $E_n$ satisfies the following recurrence relation:
    \begin{align}
        E_{n+1} \defeq& \|\vx_{t_n+1}-\vx(t_{n+1})\|\\
        =&\|(\vx_{t_n}+h\vv(\vx_{t_n},t_n)-(\vx(t_n)+h\vx'(t_n)+\frac{h^2}{2}\vx''(\xi))\|\\
        \leq&\|\vx_{t_n}-\vx(t_n)\|+h\|\vv(\vx_{t_n},t_n)-\vv(\vx({t_n}),t_n)\|+\textit{LTE} \\
        \leq& E_n + h L \cdot E_n+\frac{1}{2}M_2 h^2.
    \end{align}
    This yields 
    \begin{equation}
        E_n \leq \frac{hM_2}{2L}\left(e^{L}-1\right).
    \end{equation}
    While considering the Gronwall inequality for the precise generation process, we can obtain that for two PF-ODE solutions $\vx_t,\,\vy_t$ with distinct initial conditions $\vx_1,\,\vy_1$ at time $t=1$, the exact solutions at time $t=0$ satisfy the following property:
    \begin{equation}
        \|\vy_{0}-\vx_0\|\leq \|\vx_1-\vy_1\|e^{L}.
    \end{equation}
    This equation will give a upper bound to the \intrinsiceffectmetric~$ \gE_G(\vx, \vu)$:
    \begin{equation}
    \label{eq:intrinsic-metric-bound-by-L}
        \sup_{\vx,\vu}\gE_G(\vx, \vu) \leq e^{L}.
    \end{equation}
    From \cref{eq:intrinsic-metric-bound-by-L} and the assumption $\gE_G(\vz^\prime, \hat{\vz} - \vz) > C$,
    we can infer that $L\geq \log C$.
    Substituting into our previous discussion on the global truncation error, we have:
    \begin{equation}
        \mathcal{U}\geq \frac{h M_2}{2L}(e^{L}-1) \cdot C \geq h M_2 \frac{C-1}{2 \log C} \cdot C.
    \end{equation}
\end{proof}

\subsection{Proof of \cref{thm:prob-lower-bound}} \label{suppsub:additional-lemmas}
To prove \cref{thm:prob-lower-bound}, we first present two lemmas.
\begin{lemma}\label{push forward}
    For any ODE with a unique solution:
    \begin{equation}\label{lemmaode}
        \frac{\dd\vx_t}{\dd t}=\vv(\vx_t,t),
    \end{equation}
    its solution can be expressed as $\vx_t=\phi_t(\vx_0)$ for $t \in [0, 1]$ with the initial condition $\vx_0$ at $t=0$. Let $p_t(\vx)$ be a probability density function that satisfies $\int_{\phi_t^{-1}(A)} p_0(\vx)\dd\vx=\int_A p_t(\vx)\dd\vx$ for any measurable set $A$.
    Then the geometric average of \intrinsiceffectmetric~for mapping $\phi_1$, \textit{i.e.}, $\bar {\mathcal {E}}_{\phi_1}(\vx)$ defined as \cref{def:geo-avg-effect-coef}, satisfies:
    \begin{equation*}
        \left| \bar {\mathcal {E}}_{\phi_1}(\vx) \right|^n \geq \exp(\int_0^1\nabla\cdot \vv(\vx_t,t)dt)=\frac{p_0(\vx)}{p_1(\phi_1(\vx))},
    \end{equation*}
\end{lemma}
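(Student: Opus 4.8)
The plan is to prove the chain of identities $\bar{\gE}_{\phi_1}(\vx)^n = |\det J_{\phi_1}(\vx)| = \exp\!\left(\int_0^1 \nabla\cdot\vv(\vx_t,t)\,dt\right) = p_0(\vx)/p_1(\phi_1(\vx))$, which immediately gives the claimed inequality (indeed with equality). Throughout I would assume $\vv(\cdot,t)$ is $C^1$ in its spatial argument, so that the flow map $\phi_t$ is a $C^1$ diffeomorphism and all Jacobians below are well defined.

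First I would rewrite the left-hand side in terms of the Jacobian determinant. By \cref{def:geo-avg-effect-coef} and the remark following it, $\bar{\gE}_{\phi_1}(\vx)$ is the geometric mean of the singular values $\sigma_1,\dots,\sigma_n$ of $J_{\phi_1}(\vx)$, hence $\bar{\gE}_{\phi_1}(\vx)^n = \prod_{i=1}^n \sigma_i = |\det J_{\phi_1}(\vx)|$, using the elementary fact that the product of the singular values of a square matrix equals the absolute value of its determinant. This reduces the problem to evaluating $\det J_{\phi_1}(\vx)$.

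Next I would apply Liouville's formula. Differentiating the ODE \cref{lemmaode} with respect to the initial condition shows that $M(t) \defeq J_{\phi_t}(\vx)$ solves the first variational equation $\dot M(t) = (D_\vx\vv)(\phi_t(\vx),t)\,M(t)$ with $M(0) = I$. Jacobi's formula for the derivative of a determinant then gives $\tfrac{d}{dt}\det M(t) = \det M(t)\,\Tr\!\big((D_\vx\vv)(\phi_t(\vx),t)\big) = \det M(t)\,(\nabla\cdot\vv)(\vx_t,t)$. Integrating this scalar linear ODE from $0$ to $1$ with $\det M(0)=1$ yields $\det J_{\phi_1}(\vx) = \exp\!\big(\int_0^1 (\nabla\cdot\vv)(\vx_t,t)\,dt\big)$, which is in particular strictly positive, so the absolute value may be dropped.

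Finally I would extract the density ratio from the pushforward hypothesis. Performing the change of variables $\vy = \phi_1(\vx)$ in the identity $\int_{\phi_1^{-1}(A)} p_0(\vx)\,d\vx = \int_A p_1(\vy)\,d\vy$, which holds for every measurable $A$, produces $p_1(\phi_1(\vx))\,|\det J_{\phi_1}(\vx)| = p_0(\vx)$, i.e. $p_1(\phi_1(\vx)) = p_0(\vx)/|\det J_{\phi_1}(\vx)|$; combining with the previous step closes the chain. The main obstacle is the regularity bookkeeping: justifying that $\phi_t$ is genuinely $C^1$ (so that the variational equation and the change of variables are valid) and upgrading the almost-everywhere density identity coming from ``for all measurable $A$'' to the pointwise statement at the single point $\vy = \phi_1(\vx)$, which is where continuity of $p_0$ and $p_1$ (consistent with the continuity assumptions made elsewhere in the paper) is needed; I would flag these hypotheses at the outset and treat the remainder as routine.
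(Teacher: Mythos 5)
Your proof is correct and follows essentially the same route as the paper's: both differentiate the flow in the initial condition to get the variational equation, apply Jacobi/Liouville's formula to obtain $\det J_{\phi_1}(\vx) = \exp\big(\int_0^1 \nabla\cdot\vv(\vx_t,t)\,dt\big)$, and identify this quantity with the density ratio --- the only difference being that you extract $p_0(\vx)/p_1(\phi_1(\vx)) = |\det J_{\phi_1}(\vx)|$ by a direct change of variables at $t=1$, whereas the paper integrates the continuity (Fokker--Planck) equation for $\log p_t(\vx_t)$ along the trajectory. Your observation that the statement actually holds with equality is also right: the paper's last step invokes ``product of singular values $\geq$ absolute value of the product of eigenvalues,'' but for the full product of all $n$ singular values this is an equality, since both sides equal $|\det J_{\phi_1}(\vx)|$.
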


\begin{proof}
    For any $t\in[0,1]$ and $\vx_0$, define $J(t)=\frac{\partial \vx_t}{\partial \vx_0}$, where $\vx_t$ is defined in \cref{lemmaode}. Then $J(0)=I$, $J(1)=J_{\phi_1}(\vx_0)$, and
    \begin{align}
        \frac{\dd}{\dd t}J(t)&=\frac{\dd}{\dd t}\frac{\partial \vx_t}{\partial \vx_0}\\
        &=\frac{\partial \vv(\vx_t,t)}{\partial \vx_0}\\
        &=\frac{\partial \vv(\vx_t,t)}{\partial\vx_t}\circ J(t).
    \end{align}
Multiplying $J^{-1}(t)$ in both sides, we have
\begin{equation}\label{eq:dJ-Jinv}
    \frac{\dd J(t)}{\dd t} \circ J^{-1}(t) = \frac{\partial \vv(\vx_t,t)}{\partial\vx_t}.
\end{equation}
Denote the $i$-th eigenvalue of $J(t)$ as $\lambda_i(J(t))$. With \cref{eq:dJ-Jinv}, we can obtain $\frac{\dd}{\dd t}\log |\prod_{i=1}^n \lambda_i(J(t))|$ as follows 
\begin{align}
    \frac{\dd}{\dd t}\log |\prod_{i=1}^n \lambda_i(J(t))|
    &=\sum_{i=1}^n\frac{\dd}{\dd t}\log|\lambda_i(J(t))|\\
    &=\sum_{i=1}^n\frac{\dd \lambda_i(J(t))}{\dd t}\lambda_i(J(t))^{-1}\\
    &=\text{tr}\left(\frac{\dd J(t)}{\dd t}\circ J^{-1}(t)\right)\\
    &=\text{tr}\left(\frac{\partial \vv(\vx_t, t)}{\partial \vx_t}\right)\\
    &=\nabla\cdot \vv(\vx_t,t).
\end{align}
 Thus deduce that
\begin{equation}
    \log |\prod_{i=1}^n \lambda_i(J(t))|=\int_0^1\nabla\cdot { \vv(\vx_t,t)}\dd t.
\end{equation}
On the other hand, for the probability density we can derive that:
\begin{align}
    \frac{\dd \log p_t(\vx_t)}{\dd t}&=\nabla \log p_t(\vx_t)\cdot \frac{\dd \vx_t}{\dd t}+\frac{\partial}{\partial t}\log p_t(\vx_,t)\\
    &=\nabla \log p_t(\vx_t)\cdot \vv(\vx_t,t)+\frac{\partial}{\partial t}\log p_t(\vx_,t)\\
    &=\frac{\nabla p_t(\vx_t)}{p_t(\vx_t)}\cdot \vv(\vx_t,t)-\frac{\nabla\cdot (p_t(\vx_t)\vv(\vx_t,t))}{p_t(\vx_t)}\label{FPused}\\
    &=-\nabla \cdot \vv(\vx_t,t),
\end{align}
where \cref{FPused} is a result from Fokker-Planck equation: $\frac{\partial p_t(\vx_t)}{\partial t}=-\nabla\cdot \left(p_t(\vx_t)\vv(\vx_t,t)\right)$.
Thus we conclude that
\begin{equation}
         \log p_t(\vx_t)+\log|\prod_{i=1}^n \lambda_i(J(t))|=\log p_0(\vx_0)+\log|\prod_{i=1}^n \lambda_i(J(0))|.
         \end{equation}
Note that $J(0)$ is the identity matrix, hence we obtain
\begin{equation}
         \left| \bar {\mathcal {E}}_{\phi_1}(\vx) \right|^n\geq|\prod_{i=1}^n \lambda_i(J(t))|=\frac{p_{0}(\vx_0)}{p_t(\vx_t)},
\end{equation}
where the first inequality holds because the product of the singular values of a matrix is greater than the absolute value of the product of its eigenvalues.
\end{proof}

\begin{lemma} \label{prob-ineqlemma}
    Let $\gamma$ be the standard Gaussian probability density function and $G^{-1}$ be the inversion function. For all $\vx \in \supp(\datadist)$, and for all $K > 0$, $k > 0$, we have
    \begin{equation}
    \begin{aligned}
                \realdist\left( \left\{\vx: \frac{p_{data}(\vx)}{\gamma(G^{-1}(\vx))}<K \right\} \right)\ge& \realdist\left(\left\{\vx:{p_{data}(\vx)}<kK\right\}\right)\\
                &-\realdist\left(\left\{\vx:\gamma(G^{-1}(\vx))\leq k\right\}\right).
    \end{aligned}
    \end{equation}
\end{lemma}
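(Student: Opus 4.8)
The plan is to convert the inequality into an elementary set-containment statement and then apply the Bonferroni/union bound. Write $A \defeq \{\vx : p_{data}(\vx) < kK\}$ and $B \defeq \{\vx : \gamma(G^{-1}(\vx)) > k\}$, so that $B^C = \{\vx : \gamma(G^{-1}(\vx)) \leq k\}$. The first step is to observe that on the event $A \cap B$ the ratio is controlled: if $p_{data}(\vx) < kK$ and $\gamma(G^{-1}(\vx)) > k$, then dividing the two strict inequalities (both sides positive, since $\gamma > 0$ everywhere and $k>0$) gives
\begin{equation}
    \frac{p_{data}(\vx)}{\gamma(G^{-1}(\vx))} < \frac{kK}{k} = K.
\end{equation}
Hence $A \cap B \subseteq \left\{\vx : \frac{p_{data}(\vx)}{\gamma(G^{-1}(\vx))} < K\right\}$.

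The second step is monotonicity of the measure $\realdist$ together with the standard inequality $\realdist(A \cap B) \geq \realdist(A) - \realdist(B^C)$, which follows from $\realdist(A) \leq \realdist(A\cap B) + \realdist(B^C)$ (since $A \subseteq (A\cap B)\cup B^C$). Chaining these,
\begin{equation}
    \realdist\!\left(\left\{\vx : \tfrac{p_{data}(\vx)}{\gamma(G^{-1}(\vx))} < K\right\}\right) \geq \realdist(A\cap B) \geq \realdist(A) - \realdist(B^C),
\end{equation}
which is exactly the claimed bound once $A$ and $B^C$ are written out.

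The only points needing a word of care are measurability and the positivity used in the division. Measurability of all three events holds because $p_{data}$ is continuous on $\supp(\datadist)$ (Assumption~\ref{assump:real-density-lower-bound} for $\preal$, and the same regularity for the generation side), $\gamma$ is continuous, and $G^{-1}$ is continuous as the inverse of the flow map $G$; so $p_{data}$, $\gamma \circ G^{-1}$, and their ratio are measurable functions and the sublevel sets are Borel. Positivity is immediate: $\gamma$ is the standard Gaussian density, strictly positive on all of $\sR^n$, so the ratio is well defined and the division step is valid. I expect no real obstacle here — the lemma is a bookkeeping step isolating the two failure modes (the data point landing where $\pgen$ is not too small, and the inverted noise not straying too far from the Gaussian shell) that will later be bounded separately as $\epsilon$ and $\delta$ in \cref{thm:prob-lower-bound}.
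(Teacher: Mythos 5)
Your proof is correct and follows essentially the same route as the paper's: the key set containment $\{p_{data}(\vx) < kK\} \cap \{\gamma(G^{-1}(\vx)) > k\} \subseteq \{p_{data}(\vx)/\gamma(G^{-1}(\vx)) < K\}$ followed by the Bonferroni bound $\realdist(A \cap B) \geq \realdist(A) - \realdist(B^C)$ is exactly the paper's chain of inequalities, just stated more compactly. The remarks on measurability and positivity of $\gamma$ are fine but not needed beyond what the paper implicitly assumes.
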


\begin{proof}
\begin{align}
&\realdist\left( \left\{\vx:\frac{p_{data}(\vx)}{\gamma(G^{-1}(\vx))}<K\right\}\right)\nonumber\\
=&
\realdist\left(\{\vx:p_{data}(\vx)<K\gamma(G^{-1}(\vx))\}\cap\{\vx:\gamma(G^{-1}(\vx))>k\}\right)\\
&+
\realdist\left(\{\vx:p_{data}(\vx)<K\gamma(G^{-1}(\vx))\}\cap\{\vx:\gamma(G^{-1}(\vx))\leq k\}\right)\\
\geq& \realdist\left(\{\vx:\vx:p_{data}(\vx)<K\gamma(G^{-1}(\vx))\}\cap\{\vx:\gamma(G^{-1}(\vx))>k\}\right)\\
\ge& \realdist\left(\{\vx:\gamma(G^{-1}(\vx))>k\}\cap\{\vx:p_{data}(\vx)<kK \}\right)\\
\ge& \realdist\left(\{\vx:p_{data}(\vx)<kK \}\right)-\realdist\left(\{\vx:\gamma(G^{-1}(\vx)) \leq k\}\right).
\end{align}
\end{proof}

Now the conclusion of \cref{thm:prob-lower-bound} is a direct corollary of the two lemmas above.
For convenience, we first repeat \cref{thm:prob-lower-bound} to be proved here.
\THMproplowerbound*
\begin{proof}
        From \cref{push forward} and \cref{prob-ineqlemma}, we have
    \begin{align}
         \instableprob=&\realdist\left(\{\vx: \bar{\gE}_G(G^{-1}(\vx)) > M \}\right) \\
         \geq&\realdist\left(\left\{ \vx:\frac{\pgen(\vx)}{\gamma(G^{-1}(\vx))}< M^{-n}\right\}\right) \label{eq:thm41-proof-first-ineq}\\
         \geq& \realdist\left(\left\{\vx: \pgen(\vx) 
         < \frac{1}{(2\pi M^2)^{\frac n 2}}e^{-\frac{2n+3\sqrt{2n}}{2}}\right\}\right) \nonumber \label{eq:thm41-proof-second-ineq} \\
         &-\realdist\left(\left\{\vx:\gamma(G^{-1}(\vx))<\frac{1}{(2\pi )^{\frac n 2}}e^{-\frac{2n+3\sqrt{2n}}{2}}\right\}\right),
    \end{align}
    where the first inequality \cref{eq:thm41-proof-first-ineq} follows from \cref{push forward} if we take $p_0$ as the Gaussian density function $\gamma$ and $p_t$ as the density of generated distribution $\pgen$ during the generation process, the second inequality \cref{eq:thm41-proof-second-ineq} follows from \cref{prob-ineqlemma} with $k=\frac{1}{(2\pi )^{\frac n 2}}e^{-\frac{2n+3\sqrt{2n}}{2}}$.
\end{proof}

\subsection{Proof of \cref{thm:prob-lower-bound}}\label{subsec:app_proof_41}
Before the formal proof of \cref{thm:prob-lower-bound}, we first provide the definition of L\'{e}vy-Prokhorov metric~\citep{billing}.

\subsubsection{Definition of L\'{e}vy-Prokhorov metric}\label{sec:app_definition}
\begin{definition}[L\'{e}vy-Prokhorov metric]\label{def:lp-metric}
    For a subset $A\subset \sR^n$, define the $\epsilon$- neighborhood of $A$ by
    \begin{equation}
        A^{\epsilon}:=\{p\in\mathbb R^n \,:\, \exists q\in A,\; d(p,q)<\epsilon\}=\underset{p\in A}{\bigcup}B_{\epsilon}(p).
    \end{equation}
    where $B_{\epsilon}(p)$ is the open ball of radius $\epsilon$ centered at $p$. Let $\mathcal P(\mathbb R^n)$ denotes the collection of all probability measures on $\mathbb R^n$
    The L\'{e}vy-Prokhorov metric $\pi:\mathcal P(\mathbb  R^n)^2\rightarrow  [0,+\infty)$is defined as follows:
    \begin{equation}
        \pi(\mu,\nu):=\inf\{\epsilon>0\,:\,\mu(A)\leq  \nu(A^{\epsilon})+\epsilon \;and\; \nu(A)\leq  \mu(A^{\epsilon})+\epsilon\}.
    \end{equation}
    This metric is equivalent to weak convergence of measures~\citep{billing}.
\end{definition}

\subsubsection{Proof of \cref{thm:mix-of-gaussian-neighbor-approx}}
For convenience, we first repeat \cref{thm:mix-of-gaussian-neighbor-approx} to be proved here.
\THMuniversalapprox*
\begin{proof}
    $n$-dimensional Hermite functions form a complete orthogonal basis of $L^2(\mathbb R^n)$ space~\citep{stein2003fourier}. Hermite functions can be expressed as:
    \begin{equation}
        \phi_\alpha(\vx) = H_\alpha(\vx) e^{-\|\vx\|^2}.
    \end{equation}
    Here $\alpha$ is a multi-index, an ordered $n$-tuple of nonnegative integers. $H_\alpha(\vx)$ is a polynomial of $\alpha$ order called Hermite polynomials. By Plancherel theorem, the Fourier transform is a isomorphism on $L^2(\mathbb R^n)$.~\citep{folland2013real} It is sufficient to prove the Fourier transform of $\{p(\vx) = \sum_{i=1}^m a_i f_i * g_{w_i}(\vx)\}$ can converge to Hermite functions in $L^2$. We denote the Fourier transform of any function $h$ as $\widehat{h}$, with the wide hat notation $\widehat{\cdot}$. Thus
    \begin{equation}
        \sum_{i=1}^m \widehat{a_i f_i * g_{w_i}}(\vx)=\sum_{i=1}^m a_i\hat{f_i}\cdot e^{-\frac 1 2w_i^2\|\vx\|^2}.
    \end{equation}
    Since the continuous compact support function $C_c(\mathbb R^n)$ is dense in $L^2(\mathbb R^n)$~\citep{folland2013real}, by the isomorphism, $\{a_i\hat{f_i}\}$ is also dense in $L^2(\mathbb R^n)$. $H_\alpha(\vx)e^{-\frac 1 2 \|\vx\|^2}$ is a $L^2$ function. 
 Then we derive that $\forall \phi_\alpha,\forall\delta>0,\exists a f$ and take $w_i=1$,
    \begin{align}
        \|  \widehat{a f\ast g_{1}}(\vx)-\phi_\alpha(\vx)\|_{L^2}&=\|(H_\alpha(\vx)e^{-\frac 1 2 \|\vx\|^2}-a\hat f)e^{-\frac 1 2 \|\vx\|^2}\|_{L^2}\\
        &\leq\|(H_\alpha(w_i\vx)e^{-\frac 1 2 \|\vx\|^2}-a\hat f)\|_{L^2}\|e^{-\frac 1 2 \|\vx\|^2}\|_{L^\infty}\\
        &\leq \delta.
    \end{align}
    The first inequality is Hölder's inequality, and the second inequality follows from the density of $af$ in the $L^2$ space.
    $\forall \epsilon >0, \,h(x)\in L^2(\mathbb R^n)$, there exists a finite set of multi-index $\{\alpha_k,k\leq N\}$ and sequence $\{\theta_{\alpha_k}\}$ such that
    \begin{equation}
        \|\sum_{k\leq N}\theta_{\alpha_k}\phi_{\alpha_k}(\vx)-\hat h(\vx)\|_{L^2}\leq \frac{\epsilon}{2},
    \end{equation}
    then for each $\phi_{\alpha_k}$ choose a $a_{\alpha_k} f_{{\alpha_k}} \ast g_{w_{\alpha_k}}$ such that $\|a_{\alpha_k} f_{{\alpha_k}} \ast g_{w_{\alpha_k}}(\vx)-\theta_{\alpha_k}\phi_{\alpha_k}(\vx)\|_{L^2}\leq \frac{\epsilon}{2N}$, we obtain that
    \begin{align}
        &\|\sum_{k\leq N}a_{\alpha_k} f_{{\alpha_k}} \ast g_{w_{\alpha_k}}(\vx)-\hat h(\vx)\|_{L^2}\\
        \leq &\|\sum_{k\leq N}a_{\alpha_k} f_{{\alpha_k}} \ast g_{w_{\alpha_k}}(\vx)-\sum_{k\leq N}\theta_{\alpha_k}\phi_{\alpha_k}(\vx)\|_{L^2}+ \|\sum_{k\leq N}\theta_{\alpha_k}\phi_{\alpha_k}(\vx)-\hat h(\vx)\|_{L^2}\\
        \leq &\epsilon,
    \end{align}
    which shows the density of $\sum_{k\leq N}a_{i} f_{{i}} \ast g_{w_{i}}$ in $L^2(\mathbb R)$. 
    Now we prove the density in L\'{e}vy-Prokhorov metric, $\forall f$ is a density function of $\mathbb R^n$, $\forall \epsilon>0$, we can construct an $\epsilon$-decomposition as follow:
    Since $f$ is integrable, there is a compact set $K_f$ in $\mathbb R$ with $\int_{K_f^c}f(x)\dd \vx<\epsilon$ and $\int_{\partial K_f}f(\vx)\dd \vx=0$. The interior of K is an open set $O$ and by the structure theorem of open sets, there exists a decomposition $O=\bigcup_i O_i$, $\{O_i\}$ are disjoint open sets and to make the index set finite, the distance between $O_i$ is no less than some $d>0$. Then we have $O = \bigcup_{i\leq N'}O_i$

    Then we define $a_i\defeq\int_{O_i}f(\vx)\dd \vx$ and $f_i(\vx)\defeq \frac{1}{a_i}f(\vx)I_{O_i}$, $\forall A'\subset \bigcup_{i\leq N'}O_i$, denote $A'\cap O_j$ as $A'_j$
    \begin{align}
        &\int_{A'}|f-\sum_{i}a_if_i\ast g_{w_i}|\dd \vx \\
        =&\sum_{j}\int_{A_j'}|f-\sum_{i}a_if_i\ast g_{w_i}|\dd \vx\\
        \leq& \sum_{j}\int_{A_j'}|f-a_jf_j\ast g_{w_j}|\dd \vx+\sum_{j}\int_{A_j'}|a_jf_j\ast g_{w_j}-\sum_{i}a_if_i\ast g_{w_i}|\dd \vx.
    \end{align}
    $g_{w_j}$ is an approximate identity about $w_j$ and when $w_j$ is sufficiently small, $\int_{A_j'}|f-a_jf_j\ast g_{w_j}|\dd \vx\leq \|f-a_jf_j\ast g_{w_j}\|_{L^1}\leq \frac \epsilon {2N'}$ and
    \begin{equation}
          \int_{A_i'}|a_jf_j\ast g_{w_j}-\sum_{i}a_if_i\ast g_{w_i}|\dd \vx= \sum_{i\neq j}\int_{A_i'} |a_if_i\ast g_{w_i}|\dd\vx\leq\sum_{i\neq j}\frac{1}{2\pi w_i}\exp(-\frac{d^2}{2w_i^2}),
    \end{equation}
    when $w_i$ is sufficiently small, $\sum_{i\neq j}\frac{1}{2\pi w_i}\exp(-\frac{d^2}{2w_i^2})\leq  \frac{\epsilon}{2N'}$. From this we conclude that $\forall A'\subset \bigcup_{i\leq N'}O_i$, we have
    \begin{align}
        \int_{A'}f(\vx)\dd \vx \leq \int_{A'}\sum_{i}a_if_i\ast g_{w_i}(\vx)\dd \vx +\epsilon,\\
        \int_{A'}\sum_{i}a_if_i\ast g_{w_i}(\vx)\dd \vx \leq \int_{A'}f(\vx)\dd \vx +\epsilon.
    \end{align}
    And $\forall A'\subset O^c$, $\int_{A'}f(\vx)\dd\vx\leq \epsilon$, then it is sufficient to prove
    \begin{equation}
        \int_{A'}\sum_{i}a_if_i\ast g_{w_i}(\vx)\dd\vx\leq \int_{A'^\epsilon}f(\vx)\dd\vx +\epsilon
    \end{equation}
    and this could also be obtained by the $L^1$ convergence property of approximate identity: $\int_{A'}\sum_{i}a_if_i\ast g_{w_i}(\vx)\dd\vx < \epsilon$ with appropriate $w_i$.

    In summary, $\forall A'\subset\mathbb R^n$, 
    \begin{align}
        \int_{A'}f(\vx)\dd\vx
        &= \int_{A'\cap O}f(\vx)\dd\vx+ \int_{A'\cap O^c}f(\vx)\dd\vx\\
        &\leq \int_{(A'\cap O)^\epsilon}\sum_{i}a_if_i\ast g_{w_i}(\vx)\dd\vx +\epsilon+\epsilon\\
        &\leq \int_{(A')^{2\epsilon}}\sum_{i}a_if_i\ast g_{w_i}(\vx)\dd\vx +2\epsilon,\\
        \int_{A'}\sum_{i}a_if_i\ast g_{w_i}(\vx)\dd\vx
        &= \int_{A'\cap O}\sum_{i}a_if_i\ast g_{w_i}(\vx)\dd\vx+ \int_{A'\cap O^c}\sum_{i}a_if_i\ast g_{w_i}(\vx)\dd\vx\\
        &\leq \int_{(A'\cap O)^\epsilon}f(\vx)\dd\vx +\epsilon+\epsilon\\
        &\leq \int_{(A')^{2\epsilon}}f(\vx)\dd\vx +2\epsilon.
    \end{align}
    By the arbitrariness of $\epsilon$, we conclude that the set of distributions with probability density function $\{\sum_{i}a_if_i\ast g_{w_i}\}$ is dense.
\end{proof}

\subsection{Proof of \cref{thm:main}}
For convenience, we first repeat \cref{thm:main} to be proved here.
\THMmain*
\begin{proof}
The overall idea of the proof is to find a value $M$ related to $w_i$ such that as $n \to \infty$, we have $w_i \to 0$, thereby implying that both $\epsilon$ and $\delta$ tend to 0 and $M\rightarrow\infty$.
From \cref{assump:gen-dist-mixture}
    \begin{align}
        p_{gen}(\vx)&=\sum_{i=1}^m\int_{\mathbb R^n}a_if_i(\vy)g_{w_i}(\vx-\vy)\dd \vy\\
        &=\sum_{i=1}^m\int_{\underset{i\leq m}{\bigcup}O_i}a_if_i(\vy)\frac{1}{(2\pi w_i^2)^{\frac n 2}}e^{-\frac{\|\vx-\vy\|^2}{2w_i^2}}\dd \vy\\
        &\leq \sum_{i=1}^m\int_{\underset{i\leq m}{\bigcup}O_i}a_if_i(y)\frac{1}{(2\pi w_i^2)^{\frac n 2}}e^{-\frac{\min_{i\leq m}d(\vx, O_i)^2}{2w_i^2}}\dd \vy\\
        &=\sum_{i=1}^m a_i\frac{1}{(2\pi w_i^2)^{\frac n 2}}e^{-\frac{\min_{i\leq m}d(\vx, O_i)^2}{2w_i^2}}\\
        &\leq \max_{i\leq m}\left\{\frac{1}{(2\pi w_i^2)^{\frac n 2}}e^{-\frac{\min_{i\leq m}d(\vx, O_i)^2}{2w_i^2}}\right\}.
    \end{align}
    We define a dominating function
    \begin{equation}\label{def:h}
        h_{w_1,...,w_m}(\vx)\defeq \max_{i\leq m}\left\{\frac{1}{(2\pi w_i^2)^{\frac n 2}}e^{-\frac{\min_{i\leq m}d(\vx, O_i)^2}{2w_i^2}}\right\},
    \end{equation} which is monotonically decreasing about $\min_{i\leq m}d(\vx, O_i)$ and can be written as $h_{w_1,...,w_m}(x)=h_{w_1,...,w_m}'(\min_{i\leq m}d(\vx, O_i))$, where $h_{w_1,...,w_m}'(x)\defeq\max_{i\leq m}\left\{\frac{1}{(2\pi w_i^2)^{\frac n 2}}e^{-\frac{x^2}{2w_i^2}}\right\} $.  From the above inequality we obtain:
    \begin{align}
        &\left\{\vx:\pgen (\vx)\geq \frac{1}{(2\pi M^2)^{\frac n 2}}e^{-\frac{2n+3\sqrt{2n}}{2}}\right\}\\\subset&\left\{\vx: h_{w_i,...,w_m}(\vx)\geq \frac{1}{(2\pi M^2)^{\frac n 2}}e^{-\frac{2n+3\sqrt{2n}}{2}}\right\}\\
        =&\left\{\vx: \min_{i\leq m}d(\vx, O_i)\leq r_M\right\},\label{76}
    \end{align}
where
\begin{equation}\label{eq:def-rm}
    r_M \defeq \|h_{w_i,...,w_m}'^{-1}\left(\frac{1}{(2\pi M^2)^{\frac n 2}}e^{-\frac{2n+3\sqrt{2n}}{2}}\right)\|.
\end{equation}
Based on this, taking $M\leq M_0\defeq \min_{1\leq i\leq m}\exp \left(\frac{1}{8}\frac{\bar{d}_{\min}^2}{w_i}-\ln \frac{1}{w_i} + 2 + 3\sqrt{\frac{2}{n}} \right)$, we observe that $M_0\rightarrow\infty$ as $w_i\rightarrow 0 $, and in the following part we will proof $w_i\rightarrow 0 $ as $n\rightarrow\infty$. Here $\bar{d}_{\min}$ is the minimum of distance between $B_i$.
    
    Since $h_{w_1,...,w_m}'$ is radial, once $M$ is fixed, $r_M$ is well defined. Thus there exists a unique $w_j$ in \cref{def:h} such that for any $\vx$ satisfying $\min_{i\leq m}d(\vx, O_i)=r_M$, we have
    \begin{equation}
        h_{w_1,...,w_m}(\vx)=\frac{1}{(2\pi w_j^2)^{\frac n 2}}e^{-\frac{\min_{i\leq m}d(\vx, O_i)^2}{2w_j^2}}.
    \end{equation}
    Then from \cref{76} we deduce that
    \begin{align}
         &\left\{\vx:\pgen (\vx)\geq \frac{1}{(2\pi M^2)^{\frac n 2}}e^{-\frac{2n+3\sqrt{2n}}{2}}\right\}\\\subset&\left\{\min_{i\leq m}d(\vx, O_i)\leq \frac{1}{2}\sqrt{nw_j}\bar{d}_{\min}\right\}=\bigcup_{i\leq m}\left\{d(\vx, O_i)\leq \frac{1}{2}\sqrt{nw_j}\bar{d}_{\min}\right\},
    \end{align}
    where $\bar{d}_{\min}$ is the dimension-normalized $d_\min$, as defined in \cref{assump:sparsity-group}.
    As we discussed previously in the third of \cref{assump:sparsity-group}, all $O_i$ is contained by a cube $B_i$, consequently $\left\{d(\vx, O_i)\leq \frac{1}{2}\sqrt{nw_j}\bar{d}_{\min}\right\}\subset \left\{d(\vx, B_i)\leq \frac{1}{2}\sqrt{nw_j}\bar{d}_{\min}\right\}$ and to calculate the Lebesgue measure of the geometric body obtained by expanding an $n$-dimensional cube $B_i$ outward by a distance $\frac{1}{2}\sqrt{nw_j}\bar{d}_{\min}$, we can utilize the Steiner formula~\citep{schneider1993convex}, which expresses the volume as the sum of the expansion volumes of different-dimensional faces of the original cube, with each expanded volume contribution being the product of the corresponding ball volume.

    For each $k\leq n$, the number of $k$-dimensional faces is $\binom{n}{k}\cdot 2^{n-k}$ and each has a expansion volume contribution $b_i^k\cdot \omega_{n-k}(\frac{1}{2}\sqrt{nw_j}\bar{d}_{\min})^{n-k}(\frac{1}{2})^{n-k}$. Based on this, we conclude that
    \begin{equation}
        m\left\{d(\vx, B_i)\leq \frac{1}{2}\sqrt{nw_j}\bar{d}_{\min}\right\}=\sum_{k=0}^{n}\binom{n}{k}b_i^k\cdot\omega_{n-k}(\frac{1}{2}\sqrt{nw_j}\bar{d}_{\min})^{n-k}.
    \end{equation}
    Comparing with the support of $\realdist$, which contain some disjoint cubes centered at $x_i$ and the edge length is no less than ${\sqrt{3\pi e}}b_i$, we can make the following analysis:
    \begin{align}
        \frac{ m\left\{d(\vx, O_i)\leq \frac{1}{2}\sqrt{nw_j}\bar{d}_{\min}\right\}}{({\sqrt{3\pi e}}b_i)^n}&\leq \frac{\pi^{\frac{n}{2}}(b_i\sqrt{n}+\frac{1}{2}\sqrt{nw_j}\bar{d}_{\min})^n}{\Gamma(\frac{n}{2}+1)({\sqrt{3\pi e}}b_i)^n}\\
        &\sim\frac{\pi^{\frac{n}{2}}}{\sqrt{\pi n}(\frac{n}{2e})^{\frac{n}{2}}}\frac{(b_i\sqrt{n}+\frac{1}{2}\sqrt{nw_j}\bar{d}_{\min})^n}{({\sqrt{3\pi e}}b_i)^n}\quad(\textit{Stirling formula}).\label{82}
    \end{align}
    With the low probability region assumption in \cref{assump:sparsity-group}, we have
    \begin{align}
        &\int_{O_i}f_i\ast g_{w_i}(\vx)\dd \vx\\
        \leq&\int_{B_i}f_i\ast g_{w_i}(\vx)\dd \vx\\
        =&\int_{B_i} \int f_i(\vy)g_{w_i}(\vx-\vy)\dd \vy\dd \vx\\
        =&\int f_i(\vy)\int_{B_i} g_{w_i}(\vx-\vy)\dd \vx\dd \vy\\
        \leq&\int f_i(\vy)\int_{B_i} g_{w_i}(\vx-\vx_i)\dd \vx\dd \vy\\
        = &\int_{B_i} g_{w_i}(\vx-\vx_i)\dd \vx\\
        \leq &\left(\int_{-b_i}^{b_i}\frac{1}{2\pi w_i^2}e^{-\frac{x^2}{2w_i^2}}dx\right)^n.
    \end{align}
    As mentioned previously we obtain that $\int_{O_i}f_i\ast g_{w_i}(\vx)\dd \vx>\alpha_i$ leads to $w_i$ converging to zero when $n\rightarrow\infty$, which means $M_0\rightarrow\infty$.
    Substituting this result into \cref{82}, we obtain: $\frac{ m\left\{d(\vx, O_i)\leq \frac{1}{2}\sqrt{nw_j}\bar{d}_{\min}\right\}}{({\sqrt{3\pi e}}b_i)^n}$ converge to 0 as $n\rightarrow\infty$. Thus we can deduce that
    \begin{equation}\label{measure-converge}
        \lim_{n\rightarrow\infty}\frac{m\left\{\bigcup_{i\leq m}\left\{d(\vx, O_i)\leq \frac{1}{2}\sqrt{nw_j}\bar{d}_{\min}\right\}\right\}}{m\left\{\supp (\realdist)\right\}}=0.
    \end{equation}
    
    See $\realdist$ on $\mathbb R^n$ as taking finite pixels from a continuous function $f\in C(S)$ representing a infinitely precise real image. $S$ is a compact subset of $\mathbb R^2$. The pixels form a finite $\epsilon$-dense set $S_n$ of the separable space $S$. And $S_\infty=\bigcup_{n=1}^\infty S_n$ is a countable dense set of S. And all the continuous function on $S_\infty$ is $C(S)$. $\bigcup_{n}\mathcal B(\mathbb R^n)$ forms a semialgebra with a premeasure that generates $\mathcal B(C(S))$. With the Kolmogorov extension theorem~\citep{tao2021introduction}, there is a unique distribution $\realdist'$ on $\mathcal B(C(S))$ generated by all $\realdist^{(n)}$ on $\mathbb R^n$. Additionally, considering that step functions could uniformly converge to any $f\in C(S)$. The finite n-pixel image space can also be seen as step functions on $S$: $S = \bigcup_{i\leq n}U_i$ is a decomposition of disjoint set, each $U_i$ represent a pixel and $m(U_i)=\frac{m(S)}{n}$. For any $C(S)$ value random variable X, there is a step function value random variable $X_n = \sum_{i\leq n}\vx_iI_{U_i}$ as $X$'s n-dimensional projection and $X_n$ converges to $X$ almost surely in the function space equipped with the uniform norm:
    \begin{equation}
        P(\omega:\|X_n(\omega)-X(\omega)\|\rightarrow0)=1.
    \end{equation}
    From this, we derive the convergence of probability law :
    \begin{equation}
        P^{X_n}\overset{w}{\longrightarrow}P^{X},
    \end{equation}
    which means $\realdist^{(n)}$ converge to $\realdist'$ in the Banach space of continuous and step functions on S with uniform norm~\citep{ikeda2014stochastic}. 

    Since $\realdist^{(n)}$ is absolutely continuous about $\pi_{gen}$ till the infinite-dimensional case, $\delta'=\pi_{gen}\left(G(\vz):\|z\|^2\geq 2n+3\sqrt{2n}\right)=1-F(2n+3\sqrt{2n}; n)$, where $F(\cdot;n)$ is the cumulative distribution function of chi-square distribution $\chi_{n}^2$ and $1-F(2n+3\sqrt{2n}; n)$ converges to 0. And the set $B_n=\left\{\right\|z\|^2\geq 2n+3\sqrt{2n}\}$ when placed in the function space means $\|f_z\|_{L^2}\geq m(S)(2+3\sqrt{\frac{2}{n}})$, $f_z$ is the step function associated with $z$, as a result we obtain $B_{n}\subset B_{n+1}$, so $B_{\infty} = \lim_{n\rightarrow\infty} B_n = \bigcup_{n=1}^{\infty}B_n$ and $\lim_{n\rightarrow\infty} \realdist^{(n)}(B_n)=\realdist'(B_{\infty})$. While $\pi_{gen}'(B_{\infty})=\lim_{n\rightarrow\infty}\pi_{gen}\left(G(\vz):\|z\|^2\geq 2n+3\sqrt{2n}\right)=0$, we derive that $\lim_{n\rightarrow\infty}\delta=\lim_{n\rightarrow\infty}\realdist\left(G(\vz):\|z\|^2\geq 2n+3\sqrt{2n}\right)=\lim_{n\rightarrow\infty} \realdist^{(n)}(B_n)=0$.

    Using the same analytical approach, we can obtain the ratio of maximum and minimum $p_{real}$ on $\bigcup_{i\leq m}\left\{d(\vx, O_i)\leq \frac{1}{2}\sqrt{nw_j}\bar{d}_{\min}\right\}$: $\frac{C}{C_0}$ has some limit value. Combining with \cref{measure-converge}, we arrive at the conclusion that
    \begin{align}
        \lim_{n\rightarrow\infty}\epsilon
        =&\lim_{n\rightarrow\infty}\realdist\left(\bigcup_{i\leq m}\left\{\vx:d(\vx, O_i)\leq \frac{1}{2}\sqrt{nw_j}\bar{d}_{\min}\right\}\right)\\=&\lim_{n\rightarrow\infty}\frac{\realdist\left(\bigcup_{i\leq m}\left\{\vx:d(\vx, O_i)\leq \frac{1}{2}\sqrt{nw_j}\bar{d}_{\min}\right\}\right)}{\realdist(\supp (\realdist))}=0.
    \end{align}
\end{proof}

\section{Experimental settings} \label{supp:exp-setting}

\subsection{Experiments on numerical cases}\label{supp:exp-setting_1}

\paragraph{Settings for experiments in \cref{subsec:phenomenon-by-exp}} 

To verify that \phenomenon~indeed exists in diffusion generation, we conduct experiments on a two-dimensional diffusion model with a mixture of Gaussians as the generation distribution, consisting of three Gaussian components. To compute the \intrinsiceffectmetric~and obtain the results shown in \cref{fig:phenomenon-example}(b), we utilize the finite difference to estimate the \intrinsiceffectmetric. The specific steps are as follows:

\begin{enumerate}
    \item Uniformly sample initial points as on a $201 \times 201$ uniform grid of the area $[-1, 1] \times [-1, 1]$.
    Denote each initial point as $\vx[i, j]$, where $i$ denotes the index along $x$-axis, and $j$ denotes the index along $y$-axis.
    Thus, $\vx[i, j] = (-1 + \frac{i}{100}, -1 + \frac{j}{100})$ for $i,j = 0,1,\dots,200$.
    \item Numerically solve the PF-ODE in \cref{eq:pf-ode} using the RK45 solver from $t=1$ to $t=0$. Each solution at $t=1$, \textit{i.e.}, the generated sample, can be denoted as $\hat{G}(\vx[i, j])$ for each initial point $\vx[i, j]$.
    \item Estimate the \intrinsiceffectmetric~as $\gE_G(\vx[i,j], \vn_y) \approx \|\hat{G}(\vx[i,j+1]) - \hat{G}(\vx[i,j])\| / \|\vx[i,j+1] - \vx[i,j]\|$.
\end{enumerate}

When computing the \effectmetric~shown in \cref{fig:phenomenon-example}(c), we use two points with larger input difference. The results demonstrate that the obtained \effectmetric~is lower bounded by the corresponding \intrinsiceffectmetric.

\paragraph{Settings for experiments in \cref{subsec:recon-error}} 

Similarly, we adopt the two-dimensional mixture of Gaussians as the generation distribution for these experiments. To obtain the correlation between reconstruction error and \effectmetric~as shown in \cref{fig:recon-error}(b), we follow the experimental procedure as below:
\begin{enumerate}
    \item Uniformly sample initial data from $[-1, 1] \times [-1, 1]$.
    \item Compute the reconstructed samples using the diffusion reconstruction process: first obtain the inverted noise $\hat{\vz}=\hat{G}^{-1}(\vx)$ for each initial data $\vx$, and then regenerate the data as $\hat{\vx}=\hat{G}(\hat{\vz})$.
    \item Calculate the reconstruction error $\gR(\vx)$ for each sample.
    \item Estimate the \intrinsiceffectmetric~by 1) applying a small perturbation noise $\vn$ to each inverted noise $\hat{\vz}$, and then 2) regenerating the data under perturbation as $\tilde{\vx}=\hat{G}(\hat{\vz} + \vn)$, and 3) finally resulting in the estimation of \intrinsiceffectmetric~as $\gE_G(\hat{\vz}, \frac{\vn}{\|\vn\|}) \approx \|\tilde{\vx} - \hat{\vx}\| / \|\vn\|$.
    \item Statistically analyze the correlation between $\gR(\vx)$ and the \intrinsiceffectmetric.
\end{enumerate}
This procedure allows us to empirically assess the relationship between instability coefficients and reconstruction inaccuracies, thereby validating the theoretical insights discussed in \cref{sec:phenomenon}.

\subsection{Experiments on text-to-image diffusion models}\label{supp:exp-setting_2}

\paragraph{Model and inference settings.}
Models used in our experiments include Stable Diffusion 3.5 Medium / Large~\citep{esser2024scaling} and FLUX.1-dev~\citep{flux2024}, accessed via the \texttt{diffusers} library~\citep{dhariwal2021diffusion} under the PyTorch framework. The inference is conducted on NVIDIA A800 GPUs.

During inference, most of the default scheduler configuration parameters are adopted for the numerical solution of the PF ODE.
As for the ODE solver, we adopt Euler method following the default setting, except for experiments in  \cref{suppsub:second-order-recon} that the Heun ODE solver is used.
Besides, the number of inference steps is adjusted to 500 steps for both diffusion inversion and re-generation. The procedure to obtain the reconstruction error and the \intrinsiceffectmetric~is similar to that in the numerical experiments. The adopted image dataset is introduced below.

\paragraph{Datasets.}
The experiments primarily utilize the MSCOCO2014 dataset~\citep{lin2014microsoft}. For the experiments presented in \cref{fig:recon-error}, we randomly select 100 images from the validation set of MSCOCO2014 for diffusion reconstruction.
As for experiments in \cref{suppsub:second-order-recon}, we adopt the same sampled images as in \cref{fig:recon-error}.

\section{Additional reconstructed images} \label{supp:recon-images}

To visually demonstrate that images are often difficult to be accurately reconstructed, we present failure cases of reconstruction by Stable Diffusion 3.5~\citep{esser2024scaling}.
These images are selected from Kodak24 dataset~\citep{kodak} in \cref{fig:supp-reconstruction-kodak} and MS-COCO 2014~\citep{lin2014microsoft} in \cref{fig:supp-reconstruction-coco}.
For the reconstruction that includes both diffusion inversion and regeneration processes, we follow the default scheduler setting in diffusers~\citep{von-platen-etal-2022-diffusers} with null text prompt and 100 inference steps for both inversion and regeneration.

\begin{figure*}[th]
\vskip 0.2in
\begin{center}
\begin{overpic}[width=\columnwidth]{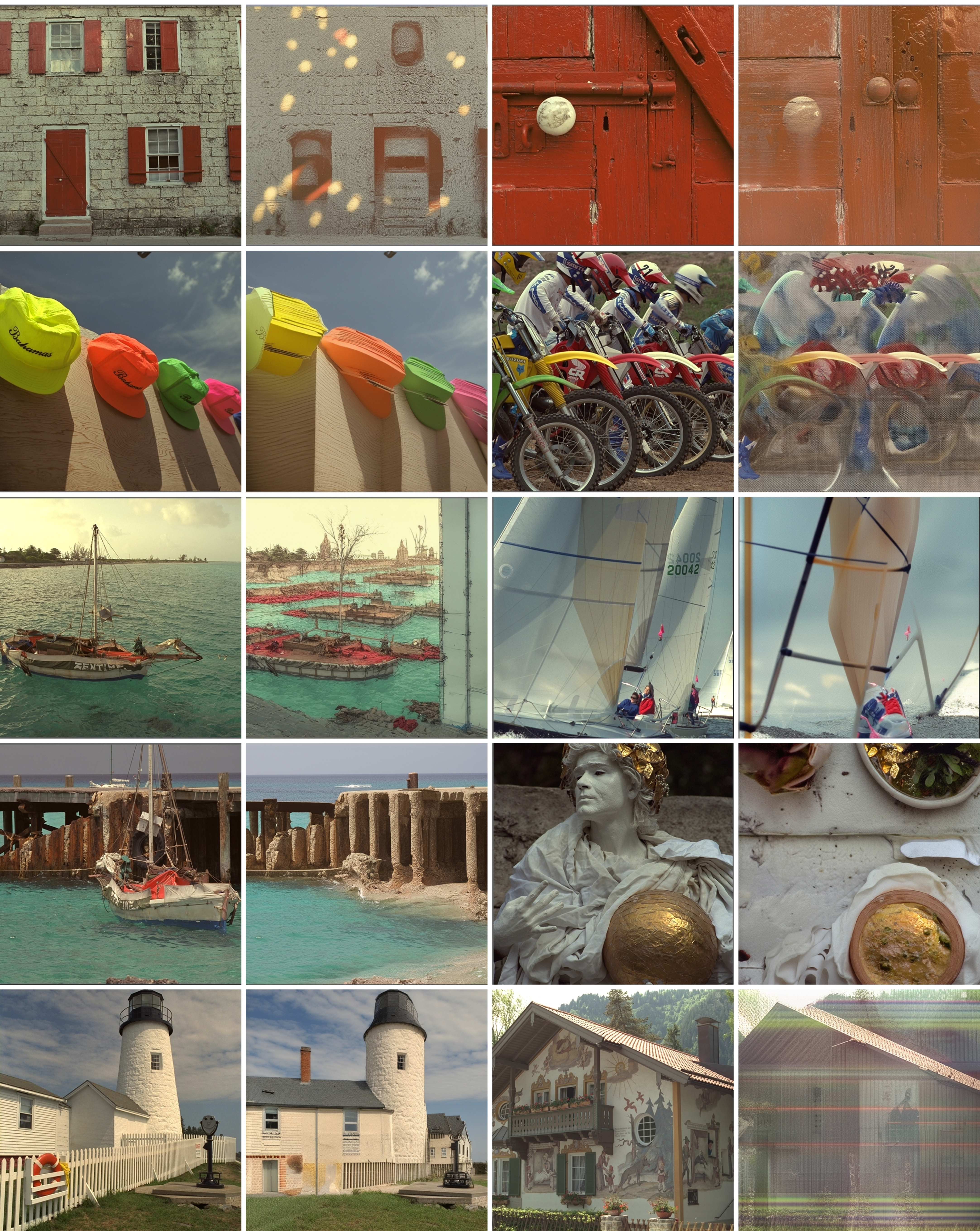}
\end{overpic}
\caption{More failure cases of reconstruction on real images from Kodak24 dataset~\citep{kodak} by Stable Diffusion 3.5~\citep{esser2024scaling}. In each row, the first and the third images are original real images, another two images are reconstructed ones.}
\label{fig:supp-reconstruction-kodak}
\end{center}
\end{figure*}

\begin{figure*}[th]
\vskip 0.2in
\begin{center}
\begin{overpic}[width=\columnwidth]{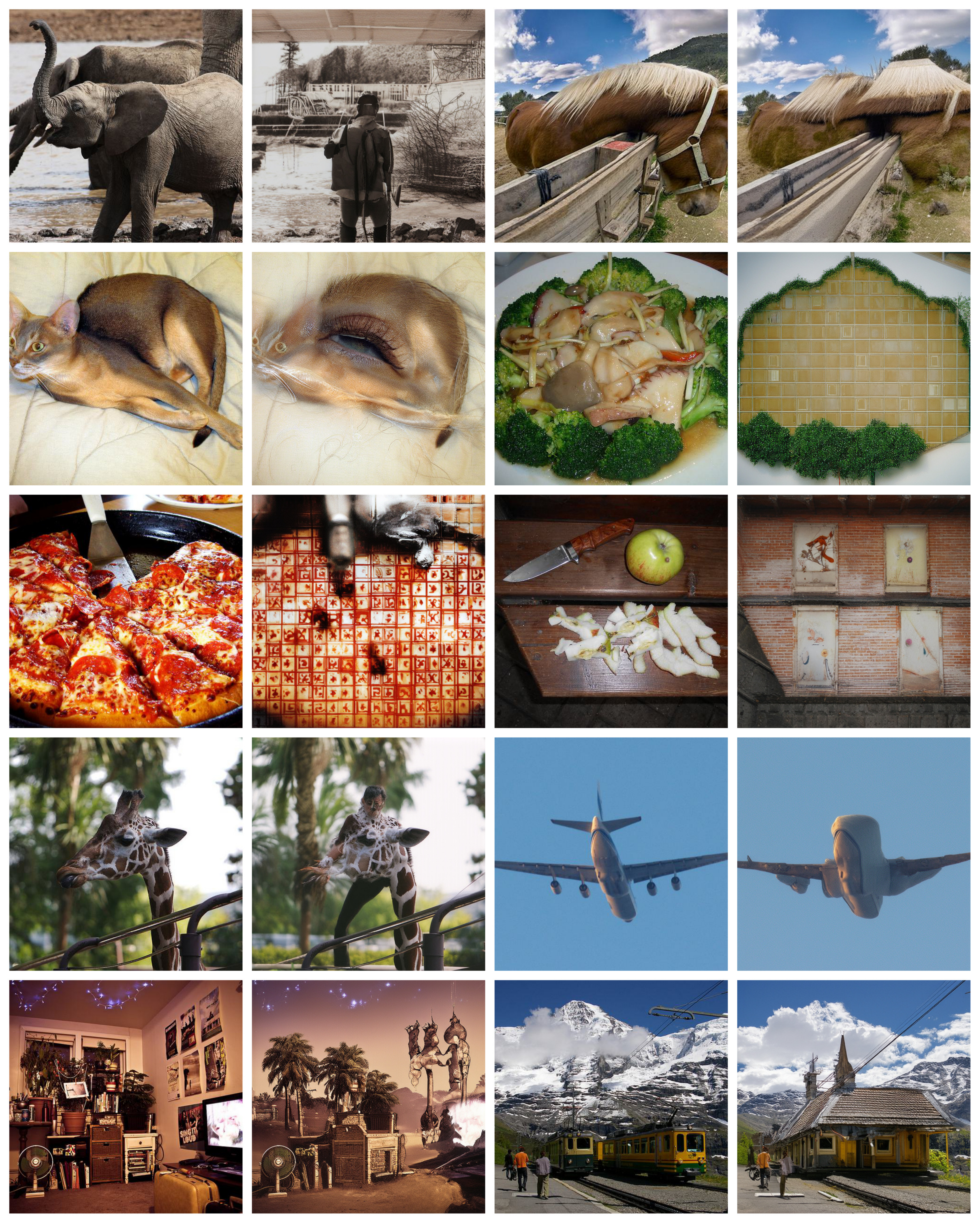}
\end{overpic}
\caption{More failure cases of reconstruction on real images from MS-COCO 2014 datadset~\citep{lin2014microsoft} by Stable Diffusion 3.5~\citep{esser2024scaling}. In each row, the first and the third images are original real images, another two images are reconstructed ones.}
\label{fig:supp-reconstruction-coco}
\end{center}
\end{figure*}

\section{Licenses of used datasets}

We list all the licenses of used datasets, code and models in \cref{tab:licenses}.

\begin{table}[hb]
\caption{Licenses of datasets, code and models used in the paper.}
\label{tab:licenses}
\begin{center}
\setlength{\tabcolsep}{3pt}
\small
\begin{tabular}{ll}
    \toprule
    Name & License \\
    \midrule
    \multicolumn{2}{l}{\textbf{Datasets}} \\
    \midrule
    MS-COCO2014~\citep{lin2014microsoft}  & \href{https://cocodataset.org/#termsofuse}{Creative Commons Attribution 4.0 License} \\
    Kodak24~\citep{kodak} & Free \\
    \midrule
    \multicolumn{2}{l}{\textbf{Code}} \\
    \midrule
    Diffusers~\citep{von-platen-etal-2022-diffusers} & \href{https://github.com/huggingface/diffusers/blob/main/LICENSE}{Apache License 2.0} \\
    \midrule
    \multicolumn{2}{l}{\textbf{Models}} \\
    \midrule
    Stable Diffusion 3.5 Medium~\citep{esser2024scaling} & \href{https://huggingface.co/stabilityai/stable-diffusion-3.5-medium/blob/main/LICENSE.md}{Stability AI Community License} \\
    Stable Diffusion 3.5 Large~\citep{esser2024scaling} & \href{https://huggingface.co/stabilityai/stable-diffusion-3.5-large/blob/main/LICENSE.md}{Stability AI Community License} \\
    FLUX.1-Dev~\citep{flux2024} & \href{https://huggingface.co/black-forest-labs/FLUX.1-dev/blob/main/LICENSE.md}{FLUX.1 [dev] Non-Commercial License} \\
    \bottomrule
\end{tabular}
\end{center}
\end{table}


\end{document}